\newtheorem{theorem}{Theorem}[section]
\newtheorem{lemma}[theorem]{Lemma}
\newenvironment{proof}[1][Proof]{\begin{trivlist}
\item[\hskip \labelsep {\bfseries #1}]}{\end{trivlist}}
\newenvironment{definition}[1][Definition]{\begin{trivlist}
\item[\hskip \labelsep {\bfseries #1}]}{\end{trivlist}}
\newcommand{\defeq}{\mathrel{\mathop:}=}
\begin{document}

\title{
Semi-supervised Spectral Clustering \\for Classification
}
\author{Arif~Mahmood and Ajmal~S.~Mian
\IEEEcompsocitemizethanks{\IEEEcompsocthanksitem Both authors are in the School of Computer Science and Software Engineering, The University of
Western Australia, Crawley WA 6009 {emails: \{arif.mahmood, ajmal.mian\}@uwa.edu.au} }}
\markboth{December 2013}{Arif \MakeLowercase{\textit{et al.}}: Semi-supervised Clustering  }

\IEEEcompsoctitleabstractindextext{\begin{abstract}
We propose a Classification Via Clustering (CVC) algorithm which enables existing clustering methods to be efficiently employed in classification problems. In CVC, training and test data are co-clustered and class-cluster distributions are used to find the label of the test data. To determine an efficient number of clusters, a Semi-supervised Hierarchical Clustering (SHC) algorithm is proposed. Clusters are obtained by hierarchically applying two-way NCut by using signs of the Fiedler vector of the normalized graph Laplacian. To this end, a Direct Fiedler Vector Computation algorithm is proposed. The graph cut is based on the data structure and does not consider labels. Labels are used only to define the stopping criterion for graph cut.  We propose clustering to be performed on the Grassmannian manifolds facilitating the formation of spectral ensembles. The proposed algorithm outperformed state-of-the-art image-set classification algorithms on five standard datasets.
  
\end{abstract}\begin{keywords}
Unsupervised clustering, Spectral Clustering, Image-set Classification, Fiedler vector, Power Iterations Algorithm
\end{keywords}}
\IEEEdisplaynotcompsoctitleabstractindextext

\maketitle

\IEEEpeerreviewmaketitle
\section*{Currently this article is under review and will be redirected to the organizational web page once accepted}
\vspace{10mm}

\section{Introduction}
\label{sec:intro}

Clustering finds the intrinsic data structure by splitting the data into similar clusters whereas classification assigns labels based on prior knowledge. Thus, clustering follows the intrinsic data boundaries whereas classification follows externally imposed boundaries. The two boundaries are generally different resulting in clusters across multiple classes (Fig.~\ref{fig:pcaPlot}). Due to this difficulty, clustering has not been widely used for classification. We bridge this gap by proposing a classification algorithm based on semi-supervised clustering of labelled data combined with unlabelled data. The proposed approach is very powerful in the context of image-set classification~\cite{chenimproved,  chen2012dictionary, ImagesetAlignment2012,harandi2011graph,Hu_TPAMI_2012,ProbElasticMatching,Nishiyama_CVPR_2007, wang2012manifold, VidClust2013}.

\begin{figure}[t]
\centering
\includegraphics[width=8.4cm]{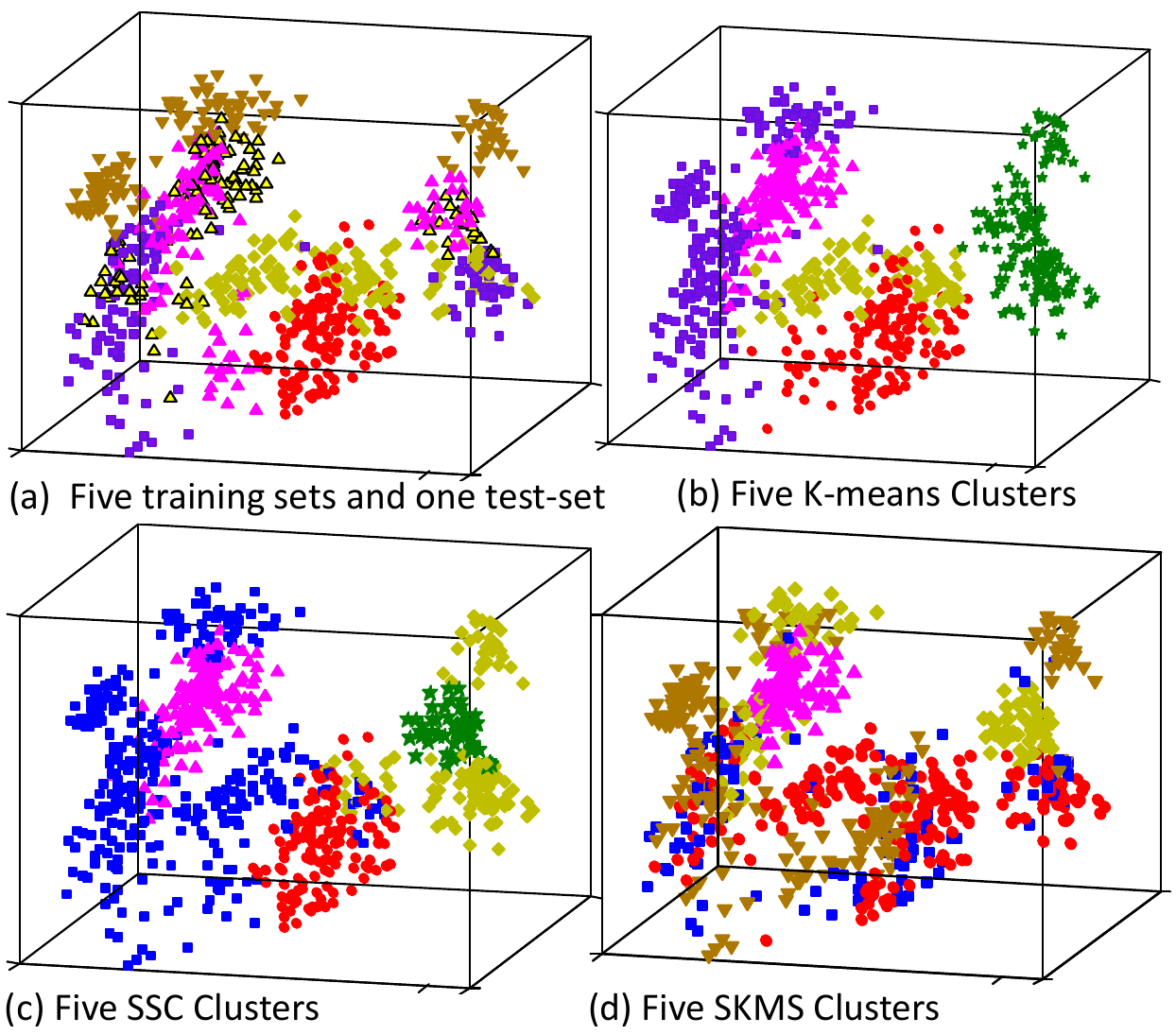} 
\caption{Plot of top three eigen-coefficients of 6 facial image-sets of 5 subjects from the CMU Mobo data~\cite{Gross_TR_2001}. (a) Ground-truth identities labeled with different shapes and colors. Test set is yellow triangles.  Five clusters obtained with (b) K-means, (c) SSC~\cite{ElhamPAMI} and (d) SKMS~\cite{Anand}. In each case, clusters are across class boundaries. No class cluster correspondence exists. }
\label{fig:pcaPlot}
\end{figure}
\begin{figure*}[t]
\centering
\includegraphics[width=17.5cm] {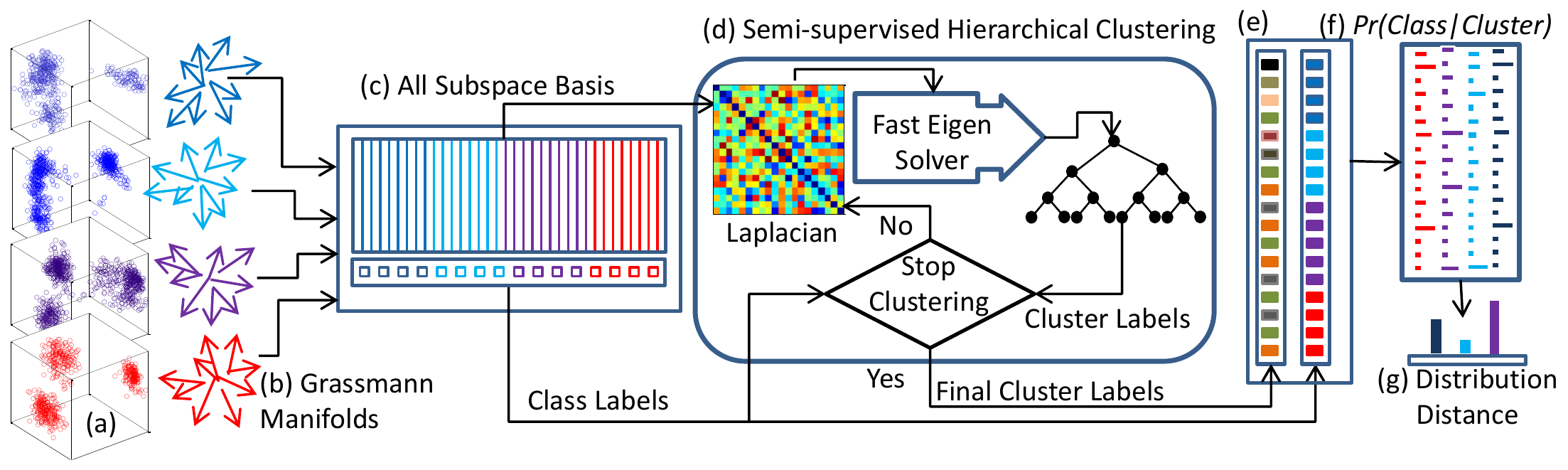}
\caption{Proposed algorithm:(a) Face manifolds of 4 subjects from CMU Mobo dataset in PCA space. (b) Each set transformed to a Grassmannian manifold. (c) Data matrix and class label vector. (d) Semi-supervised Hierarchical Clustering Algorithm: normalized graph laplacian, a novel eigensolver and supervised termination criterion. (e) Each class element assigned a  cluster label. (f) Probability distribution of each class over the set of clusters. (g) Distribution based distance measure.}
\label{fig:blockDiagram}
\end{figure*}

Unsupervised  data clustering has been extensively studied for the analysis of complex data \cite{chen2009spectral,ElhamPAMI,Fitzgibbon_CVPR_2003,fowlkes2004spectral,Kwok,von2007tutorial}. Despite significant advances, clustering and classification have remained two separate streams with no direct mapping. Figure \ref{fig:pcaPlot} compares the ground truth labels with K-means clustering, Sparse Subspace Clustering (SSC)~\cite{ElhamPAMI} and Semi-supervised Kernel Mean Shift (SKMS) clustering~\cite{Anand}. Some of the basic questions that still remain unanswered include finding the appropriate number of clusters and establishing class-cluster correspondence when the clusters overlap with multiple classes. One can observe in Figure \ref{fig:pcaPlot} that no well defined class-cluster correspondence exists  and an optimal number of clusters cannot be defined in general. Even the latest algorithms such as Semi-supervised Kernel Mean Shift clustering \cite{Anand} does not make the problem easier. Both these ill-posed questions have long occluded the utilization of clustering in classification tasks. One of the main contributions of this paper is bridging the gap between clustering and classification. The proposed algorithm does not require a unique class-cluster mapping, and can efficiently handle clusters across class boundaries as well as multiple clusters within the same class.

Our first contribution is a Classification Via Clustering (CVC) algorithm where the final classification decision is based on semi-supervised clusters computed over all data. This ensures global classification decisions as opposed to the local decisions made in the existing classification literature~\cite{harandi2011graph,Hu_TPAMI_2012,ProbElasticMatching,Nishiyama_CVPR_2007, wang2012manifold, VidClust2013}. For this purpose, we apply semi-supervised clustering on the combined training and test data without enforcing class boundaries.  We compute the clusters based on the data characteristics without using label information. The probability distribution of each class over the set of  clusters is computed using the label information. This distribution can be thought of as a compact representation of the class. Classification is performed by measuring  distances  between the probability distribution of the test data from each training class. The proposed CVC algorithm is generic and applicable to any  clustering algorithm.

Our second contribution is a Semi-supervised Hierarchical Clustering (SHC) algorithm where every parent cluster is partitioned into  two child clusters in an unsupervised way while the labels are used only as a stopping criterion for partitioning. Note that the term semi-supervised is used in a different context compared to the existing semi-supervised clustering algorithms, such as SKMS~\cite{Anand}, where the label information forms a part of the main objective function to be minimized for clustering. In our case, the objective function is independent of the class labels. Although, SHC can be used with any partitioning criterion, we consider NCut graph partitioning where the two partitions are determined by the signs of the Fiedler vector of the Laplacian matrix. Our motiviation for this choice comes from the recent advances in spectral clustering~\cite{chen2009spectral,ElhamPAMI,fowlkes2004spectral,Kwok,ngspectral}.

Our third contribution is a Direct Fiedler Vector Computation (DFVC) algorithm which is based on the shifted inverse iteration method. Hierarchical spectral clustering was not previously considered a viable option because all the eigenvectors were required to be computed while only the second least significant (Fiedler Vector) was used. The proposed DFVC algorithm solves this problem. Moreover, existing spectral clustering research is based on off-the-shelf eigen-solvers which aim to accurately find the magnitudes of the eigenvector coefficients even though only their signs are required for partitioning. The convergence of our DFVC algorithm is based on the signs.

We apply the proposed algorithms to the problem of image-set classification where the training classes and the test class consist of a collection of images. The proposed algorithms can be directly applied to the raw data however, we represent each image-set with a Grassmannian manifold and perform clustering on the manifold basis. This strategy reduces computational complexity and increases discrimination and robustness to noise. The use of Grassmann manifolds enables us to make an ensemble of spectral classifiers, each based on a different dimensionality of the manifold. This significantly increases the accuracy and also gives reliability (confidence) of the label assignment. An overview of our approach is given in Fig.~\ref{fig:blockDiagram}. Our initial work in this direction appeared in \cite{MyCVPR14} where we computed all the eigenvectors and partitioned the data into two or more clusters.  In this paper, we perform only binary partitioning and propose a Direct Fiedler Vector Computation (DFVC) algorithm thus avoiding the computation of all eigenvectors.

Experiments were performed on three standard face image-sets (Honda~\cite{Lee_CVPR_2003}, CMU Mobo~\cite{Gross_TR_2001} and YouTube Celebrities~\cite{Kim_CVPR_2008}), an object categorization (ETH 80)~\cite{ETH80}, and Cambridge hand gesture~\cite{Camdatabase} datasets. Results were compared to seven state of the art algorithms. The proposed technique achieved higher accuracy on all datasets. The maximum improvement was observed on the most challenging You-Tube dataset where our algorithm achieved 11.03\% higher accuracy than the previous best reported.  

\begin{algorithm}[t]
\caption{\textbf{CVC:} Classification Via Clustering }
\begin{algorithmic}
 \REQUIRE {\small Gallery Sets:} $\mathcal{G}$,  {\small Probe Set:} $c_p$, {\small Gallery Labels:} $\ell_\mathcal{G}$
 \ENSURE  ${\ell_p}$ ~~~~~~~~~~~~~~~~~~~~~~~~~~~~~~~~~~~{\small \{Probe Set Label\}}
 \STATE $\mathcal{D}=[ \mathcal{G}~~c_p]$
 \STATE $n_c \leftarrow$ unique($\ell_\mathcal{G}$)~~~~~~~~~~~~~~~~~~~~~~~ {\small \{Number of Classes\}}
 \STATE $\ell_{\mathcal{D}}\leftarrow [\ell_\mathcal{G}~~ \widehat{\ell}_{p}]$,~~~~~~~~~~ {\small \{$\widehat{\ell}_{p}=n_c+1$ is dummy probe label\}}
 \STATE  $[\ell_k, n_k] \leftarrow$ \textbf{SHC} $(\mathcal{D},\ell_{\mathcal{D}})$~~~~~~~~~~{\small  \{Semi-sup. Hierar. Clust.\}}
 \STATE  $H_{cc} \leftarrow $zeros$(n_c+1,n_k)$ ~~~~~~~~~~{\small \{Class cluster histogram\}}  
 \FORALL {$i\leftarrow 1:$ length($\ell_{\mathcal{D}}$)} 
 \STATE $H_{cc}(\ell_\mathcal{D}(i),\ell_k(i))=H_{cc}(\ell_\mathcal{D}(i),\ell_k(i))+1$
 \ENDFOR
 \FORALL {$i\leftarrow 1:n_c+1$}
 \STATE $H_{cc}(i,:)=H_{cc}(i,:)/\sum_{j=1}^{n_k}{H_{cc}(i,j)}$
 \ENDFOR
 \STATE $\textbf{p}_p\leftarrow H_{cc}(n_c+1,:)$~~~~~~~~~~~~~~~~~~~~{\small \{Probe cluster distribution\}}
 \FORALL {$i\leftarrow 1:n_c$}
  \STATE  $\textbf{p}_i\leftarrow H_{cc}(i,:)$~~~~~~~~~~~~~~~~~~~~{\small \{Class cluster distribution\}}
  \STATE $\mathcal{B}(i)=-\ln\sum\limits_{j=1}^{n_k}\sqrt{p_{i}(j)p_{p}(j)}$ ~~{\small\{Bhattacharyya distance\}}
  \ENDFOR
 \STATE $\ell_p \leftarrow \ell_\mathcal{G}(i_{\min})$ such that $i_{\min}\equiv \min_i (\mathcal{B})$
 \end{algorithmic}
\label{algo:CVC}
\end{algorithm}

\section{Related Work}

Many existing image-set classification techniques are variants of the  nearest neighbor (NN) algorithm where the NN distance is measured under some constraint such as representing sets with affine or convex hulls~\cite{Cevikalp_CVPR_2010}, regularized affine hull~\cite{RegNearestPoints},  or using the sparsity constraint to find the nearest points between image-sets~\cite{Hu_TPAMI_2012}. Since NN techniques utilize only a small part of the available data,  they  are more vulnerable to outliers. 

At the other end of the spectrum are algorithms that represent the holistic set structure, generally as a linear subspace, and compute similarity as canonical correlations or principle angles~\cite{Kim_TPAMI_2007}.  However, the global structure may be  a non-linear complex manifold and representing it with a single subspace may lead to incorrect classification~\cite{chenimproved}. Discriminant analysis has been used to force the class boundaries by finding a space where each class is more compact while different classes are apart. Due to multi-modal nature of the sets, such an optimization may not scale the inter class distances appropriately (Fig. \ref{fig:pcaPlot}a). In the middle of the spectrum are algorithms that divide an image-set into  multiple local clusters (local subspaces or manifolds) and measure cluster-to-cluster distance~\cite{chenimproved,Wang_CVPR_2009,wang2012manifold}. In all these techniques, label assignment decision is dominated  by either a few data points, only one cluster, one local subspace, or one basis of the global set structure,  while the rest of the  set data or local structure variations are ignored. In contrast, our approach uses all the samples and exploits the local and global structure of the image-sets.

\section{Classification Via Clustering}
Classification algorithms enforce the class boundaries in a supervised way which may not be optimal because training data label assignment  is often manual and based on the real world semantics or prior information instead of the underlying data characteristics. For example, all training images of the same person are pre-assigned the same label despite  that the intra-person distance may exceed  inter-person distance. Therefore, the intrinsic data clusters will not align  with the imposed class boundaries.

We propose to compute the class to class distance based on the probability distribution of each class over a finite set of  clusters. In the proposed technique,  clustering is performed on all data comprising the training and test sets combined without considering their labels. By doing so, we ensure natural clusters based on the inherent data structure. Clusters are allowed to be formed across two or more training classes. Once an appropriate number of clusters is obtained, we use the labels of training data to compute the probability distribution of each class over the set of clusters. Then we use distribution based distance measures to find which class is the nearest to the test set. Algorithm \ref{algo:CVC} shows the basic steps of the proposed approach.

Let $\mathcal{G}=\{X_i\}_{i=1}^{g} \in \mathds{R}^{l \times n_g}$ be the gallery containing labeled training data, where $n_g=\sum_{i=1}^{g}{n_i}$ are the total number of data points in the gallery and  $n_i$ be the data points in a set $X_i=\{x_j\}_{j=1}^{n_i} \in\mathds{R}^{l\times n_i}$. A data point $x_j\in \mathds{R}^l$ could be a feature vector (e.g.~LBP or HoG) or simply the pixel values. The number of classes in the gallery $n_c$ are generally less than or equal to the number of sets $ n_c \le g$. Let $c_p=\{x_i\}_{i=1}^{n_p} \in \mathds{R}^{l \times n_p}$ be the probe-set with a dummy label $n_c+1$. We make a data matrix by appending all gallery sets and the probe set: $\mathcal{D}=[\mathcal{G}~~c_p]\in \mathds{R}^{l \times n_d}$, where $n_d=n_g+n_p$. Let $\ell_\mathcal{G}\in \mathds{R}^{ n_g}$ be the labels of the gallery data and $\widehat{\ell}_{k} \in \mathds{R}^{n_p}$ be the dummy labels of the probe set,  selected as $n_c+1$. A  label matrix is formed as $\ell_\mathcal{D}=[\ell_\mathcal{G}~~\widehat{\ell}_{k}]\in \mathds{R}^{ n_d}$.

For the purpose of clustering we propose  Semi-supervised Hierarchical Clustering (SHC) due to better control on the quality of the clusters as discussed in Section \ref{SHC}. The output of the SHC is a cluster label array $\ell_k$ and the number of clusters $n_k$. Note that for the case of existing unsupervised clustering algorithms such as SSC or K-means,  $n_k$ is a user defined parameter while our   proposed SHC  Algorithm  automatically finds  $n_k$. Using the label arrays $\ell_\mathcal{D}$ and  $\ell_k$ we compute a 2D class-cluster histogram $H_{cc}$. Each row of $H_{cc}$ corresponds to a specific class and each column corresponds to a specific cluster. For a class $c_i$  let $\textbf{p}_i=H_{cc}(i,:)/\sum_{k=1}^{n_k}{H_{cc}(i,k)} \in \mathds{R}^{n_k}$ be the distribution over all clusters, $\sum_{k=1}^{n_k}{\textbf{p}_i[k]}=1$ and $1 \ge \textbf{p}_i[k] \ge 0$. Since we  set dummy probe label to be $n_c+1$,  the last row of $H_{cc}$ is the probe set distribution over all clusters $\textbf{p}_p=H_{cc}(n_c+1,:)/\sum_{k=1}^{n_k}{H_{cc}(i,k)}$.  

Distance between the two distributions $\textbf{p}_i$ and $\textbf{p}_p$ can be found by using an appropriate distance measure such as  Bhattacharyya~\cite{div1967} distance  
\begin{equation}\label{bhat}
\mathcal{B}_{i,p}=-\ln \sum\limits_{k=1}^{n_k}\sqrt{\textbf{p}_{i}(k)\textbf{p}_{p}(k)}.
\end{equation}
Bhattacharyya distance is based on the angle between the square-root of the two distributions: 
$\mathcal{B}_{i,p}=-\ln (\cos(\theta_{ip}))$.  Another closely related measure is
Hellinger distance~\cite{beran1977minimum} which is the $\ell_2$ norm of the difference between the  square-root of the two distributions \begin{equation}\label{Hal}
\mathcal{H}_{i,p}= \frac{1}{\sqrt{2}} \sqrt{\sum_{k=1}^{n_k}{(\sqrt{\textbf{{p}}_i[k]}-\sqrt{{\textbf{p}}_p[k]})^2}}.
\end{equation}

Fig.~\ref{fig:Fig1Stat} shows the class cluster distributions for the three clustering results shown in Fig.~\ref{fig:pcaPlot}.  Distance  $\mathcal{B}_{i,p}$ for each algorithm is shown in Fig.~\ref{fig:Fig1Stat}d. Minimum value of $\mathcal{B}_{i,p}$ is found to be  \{0.0680, 0.00423, 0.0342\} for k-means, SSC and SKMS respectively. In all three cases, the label of the test class is correctly found. We consider  the ratio of rank 2 to rank 1 distance  as SNR. A larger value of SNR shows more robustness to noise. In this experiment SNR of k-means, SSC and SKMS is  \{2.63, 78.84, 5.20\} respectively indicating that SSC based classification is more robust compared to the others.

\begin{figure}[t]
\centering
\includegraphics[width=8.5cm]{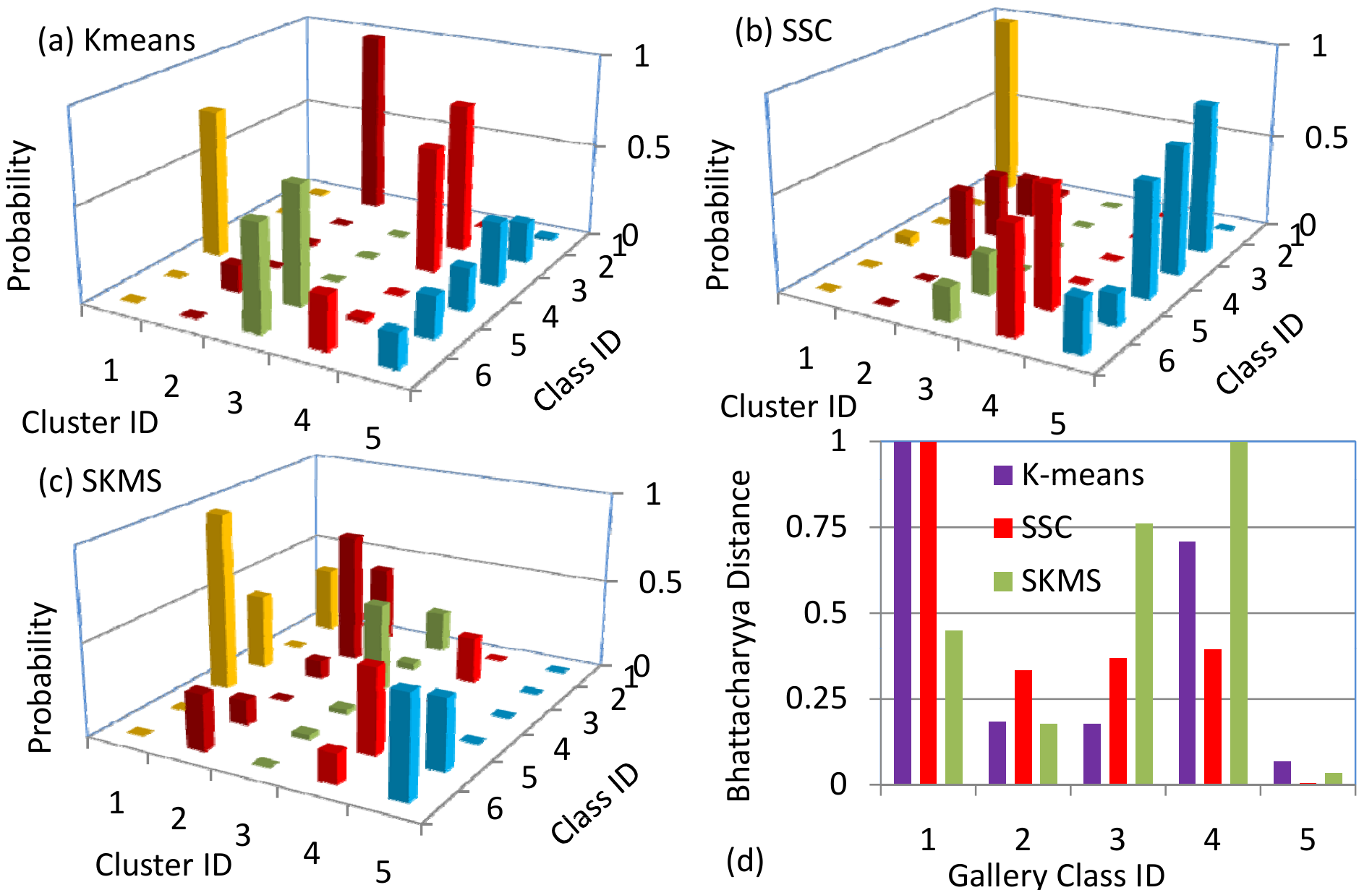}
\caption{(a-c) Plots of class-cluster distributions ($H_{cc}$) of the five gallery classes and 6-th test class over five clusters obtained by k-means, SSC and SKMS shown in Fig. \ref{fig:pcaPlot} (d) Bhattacharyya distance  of the test set from each gallery class.}
\label{fig:Fig1Stat}
\end{figure}
\section{Optimizing the Number of Clusters }
Algorithm \ref{algo:CVC} does not impose any constraint on the number of clusters however,  in Lemma \ref{lam2.1} we argue that an optimal number of clusters exists and can be found in the context of classification problem. The derivation of Lemma \ref{lam2.1} is based on the notion of {\em conditional orthogonality} and {\em indivisibility} of clusters as defined below. We assume that classes $c_i$ and $c_j$ belong to the gallery ${\mathcal{G}}$  while $c_p$ is the probe set with unknown label.

\begin{definition}[Conditional Orthogonality] Class cluster distribution of class $c_i$ is {\em conditionally orthogonal} to the distribution of a class $c_j$ w.r.t the distribution of probe set $c_p$ if \begin{equation}\label{CO}
(\textbf{p}_i \perp_{c} \textbf{p}_j)_{\textbf{p}_p} \defeq \llangle(\textbf{p}_i \land \textbf{p}_p), (\textbf{p}_j \land \textbf{p}_p)\rrangle=0 ~~\forall (c_i,c_j)\in \mathcal{G}.
\end{equation} 
\end{definition}
\begin{definition}[Indivisible Cluster] A cluster $k$ is indivisible from the probe set label estimation perspective  if $\forall (c_i,c_j)\in \mathcal{G}$ $$\label{indC}
p_i[k]\land p_j[k]\land p_p[k]=0. 
$$  $$p_i[k] \lor p_j[k] \lor p_p[k]=1.$$   A cluster is {\em indivisible} if at least one of the three probabilities, $p_i[k]$, $p_j[k]$ and $p_p[k]$ is zero. If all clusters become  indivisible, class cluster distributions of all gallery classes will become {\em conditionally orthogonal}. 
\end{definition}
\begin{lemma}
The optimal number of  clusters for a set labeling problem are the minimum number of clusters such that all  class-cluster distributions become {\em conditionally orthogonal}. \begin{equation}\label{lam1}
n_k^* \triangleq \min_{n_k} (\textbf{p}_i \perp_{c} \textbf{p}_j)_{\textbf{p}_p}~~~~~ \forall c_i,c_j\in \mathcal{G} 
\end{equation}\label{lam2.1}
\end{lemma}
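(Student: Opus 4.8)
The plan is to first collapse the pairwise orthogonality condition to a single per-cluster condition, then to fix in what sense the resulting count is \emph{optimal}, and finally to prove the two one-sided statements that together pin the optimum down to $n_k^*$. For the reduction, observe that for the nonnegative vectors involved $\llangle (\textbf{p}_i \land \textbf{p}_p),(\textbf{p}_j\land\textbf{p}_p)\rrangle = \sum_{k=1}^{n_k} p_i[k]\,p_j[k]\,p_p[k]^2$, which vanishes iff $p_i[k]\land p_j[k]\land p_p[k]=0$ for every cluster $k$; hence ``all class--cluster distributions are conditionally orthogonal'' is exactly ``every cluster is indivisible,'' as already remarked after the definition, and I would work with the indivisibility form from here on.

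Next I would fix the notion of optimality: call a clustering \emph{admissible} if the nearest-class rule of Algorithm~\ref{algo:CVC} (the minimiser of $\mathcal{B}_{i,p}$, equivalently of $\mathcal{H}_{i,p}$) returns a unique, well-defined label, and among admissible clusterings declare optimal the coarsest one, since coarser clusters carry more samples per cluster -- so the histograms $\textbf{p}_i$ are estimated more reliably -- and, as argued below, yield at least as large a discrimination margin (the rank-2-to-rank-1 distance ratio used in Section~3). The first half of the argument is that admissibility \emph{forces} conditional orthogonality: if some cluster $k$ is divisible there are $c_i\ne c_j$ in $\mathcal{G}$ with $p_i[k],p_j[k]>0$ and $p_p[k]>0$, so the probe mass in cluster $k$ contributes $\sqrt{p_i[k]p_p[k]}>0$ to $\cos\theta_{ip}$ and $\sqrt{p_j[k]p_p[k]}>0$ to $\cos\theta_{jp}$ simultaneously; by reallocating the probe points that land in this cluster -- which, by the very fact that $k$ is shared, are structurally compatible with both $c_i$ and $c_j$ -- one builds two instances with identical gallery distributions and identical nearest-class value but opposite ground truth, so no admissible decision can exist. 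Hence every optimal clustering satisfies $n_k\ge n_k^*$.

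For the reverse inequality I would establish three facts. Existence: the singleton clustering $n_k=n_d$ makes every cluster label-pure, hence indivisible, so the feasible set is nonempty and $n_k^*$ is well defined. Monotone persistence: splitting a cluster $k$ of a conditionally orthogonal clustering into $k',k''$ leaves whichever of $p_i[k],p_j[k],p_p[k]$ was zero still zero on both pieces, so the refinement is again conditionally orthogonal -- the admissible partitions form a filter in the refinement lattice, and in particular every partition finer than an $n_k^*$-partition stays admissible (so the supervised stopping rule of SHC terminates exactly at $n_k^*$). Minimality is then optimal because refinement cannot help: writing a split as $p_i[k]=a'+a''$, $p_p[k]=b'+b''$, the bound $\sqrt{(a'+a'')(b'+b'')}\ge\sqrt{a'b'}+\sqrt{a''b''}$ shows the true-class Bhattacharyya coefficient is non-increasing under refinement while the per-cluster supports shrink, so no strictly finer clustering improves reliability or (modulo the margin analysis below) the margin; and any $n_k<n_k^*$ is, by definition of $n_k^*$, not conditionally orthogonal, hence inadmissible by the previous paragraph. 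Combining the two inequalities gives $n_k^*$ as the optimal count, which is the claim.

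The step I expect to be the genuine obstacle is making the ``refinement does not help the margin'' half quantitative rather than heuristic. It is immediate that the rank-1 (true-class) Bhattacharyya coefficient cannot grow under a split, but the \emph{same} super-additivity bound makes every competing-class coefficient non-increasing too, so both the numerator and the denominator of the rank-2-to-rank-1 ratio move in the same direction; to conclude the ratio cannot strictly increase one must track how a specific admissible split redistributes mass across the competing classes and rule out the possibility that it shrinks a rank-1 distance while enlarging a rank-2 distance. Pushing that monotonicity through for all class pairs at once, under a mild genericity assumption that no two distinct gallery classes have identical cluster footprints, is where most of the work sits; the remainder is bookkeeping with the two definitions.
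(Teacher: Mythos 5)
Your proposal is correct and follows essentially the same route as the paper's own proof: the core step in both is that splitting a cluster with $p_i[k],p_p[k]>0$ can only decrease the Bhattacharyya coefficient to the true class (your superadditivity bound $\sqrt{(a'+a'')(b'+b'')}\ge\sqrt{a'b'}+\sqrt{a''b''}$ is in fact the form actually needed for $\mathcal{B}_{i,p}$, whereas the paper states the square-root-free $p_ip_p\ge\sum_j q_i[j]q_p[j]$), combined with the observation that $n_k<n_k^*$ leaves overlapped class-cluster distributions and hence reduced discrimination. Your extra scaffolding (existence via the singleton clustering, persistence of indivisibility under refinement, the admissibility framing) and your explicit flag that the rank-2-to-rank-1 margin argument remains heuristic go beyond the paper, whose own proof leaves exactly that step at the level of ``loss of useful discriminating information.''
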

\begin{proof}
Consider an indivisible cluster with two non-zero probabilities $p_i$ and $p_p$ corresponding to the gallery class $c_i$ and the probe set. Suppose this cluster is divided into $n\ge 2$ child clusters with $q_i[1],q_i[2]\cdots q_i[n]$ and $q_p[1],q_p[2]\cdots q_p[n]$ probabilities such that     $q_i[1]+q_i[2]\cdots +q_i[n]=p_i$ and $q_p[1]+q_p[2]\cdots +q_p[n]=p_p$. Then $$p_ip_p\ge \sum_{j=1}^n q_i[j]q_p[j].$$
Therefore, reduction in the cluster size will cause increase in the distances  
 $\mathcal{B}_{i,p}$ and $\mathcal{H}_{i,p}$. Moreover,  some of the probabilities $q_i[j]$ and $q_p[j]$ may become zero causing the loss of useful discriminating information.

Division of the clusters with $p_p=0$ will not have any effect on  $\mathcal{B}_{i,p}$ while $\mathcal{H}_{i,p}$ will decrease.  

Decreasing the number of clusters such that $n_k < n_k^*$, will result in some clusters with overlapped class-cluster distributions leading to an increased intra-class similarity and reduced discrimination. Therefore, $n_k^*$ are the optimal number of clusters for classification. 
\end{proof}

Existing unsupervised clustering algorithms do not ensure the clusters to be indivisible and require the number of clusters to be user defined. In the next section, we propose  Semi-supervised Hierarchical  Clustering (SHC) which efficiently solves this problem.   

\begin{algorithm}[t]
\caption{\textbf{SHC} Semi-supervised Hierarchical Clustering}
\begin{algorithmic}
 \REQUIRE {\small Global proximity matrix:} $A$, {\small class labels array:} $\ell_\mathcal{D}$, {\small cluster label matrix:} $\ell_k$, {\small current cluster ID:} $c$, {\small current recursion level:} $r$, {\small dummy probe label:} $\widehat{\ell}_p$  
 \ENSURE  $\ell'_k$ ~~~~~~~~~~~~~~~~~~~~~{\small \{Updated cluster labels matrix\}}
 \STATE  $\ell_b\leftarrow \ell_k(r,:)$~~~~~~~~~~~~~~~~~~~ {\small \{Cluster labels at current level\}}
 \STATE  $i_c \leftarrow \ell_b==c$ ~~~~~~~~~~~~~~~~~{\small \{Current cluster indicator vector\}}
 \STATE  $\ell_c \leftarrow \ell_\mathcal{D}(i_c)$ ~~~~~~~~~~~~~~{\small \{Class labels in the current cluster\}}
 \IF {Indivisible($\ell_c,\widehat{\ell}_p $)} 
 \STATE\textbf{ Return} ~~~~~~~~~~~~~~~~~~~~~~~~~~{\small \{Retain indivisible cluster\}}
 \ENDIF
 \STATE  $A_c \leftarrow A(i_c,i_c)$ ~~~~~~~~~~~~~~~ {\small \{Local proximity matrix: $n \times n$\}}
 \FORALL {${j=1:n}$}
 \STATE $D_c(j,j)=\sum_{i=1}^{n}{A_c(i,j)}$ ~~~~~~~~~~~~~{\small \{Local degree matrix\}}
 \ENDFOR
 \STATE $L_s \leftarrow {D_c^{-1/2}}(D_c-A_c){D_c^{-1/2}}$ ~~~~~~~~~~~~~\{\small {Laplacian Matrix\}}
 \STATE $u_{n-1}\leftarrow$\textbf{DFVC}$(Ls, D_c)$~~~~~~~~ {\small \{Direct Fiedler Vector Comp.\}}
 \STATE $\hat{\ell}=u_{n-1} \ge 0$~~~~~~~~~~~~~~~~~~~~~~~~~~~ {\small \{Sign based partitioning\}}
  \FORALL {$p=0:1$}
  \STATE $c \leftarrow c+1$
  \STATE $\ell'_k \leftarrow$ Update-Cluster-Labels $(\ell_k,\hat{\ell},c,p,r)$
  \STATE $\ell'_k \leftarrow\textbf{ SHC}(A,\ell_\mathcal{D},\ell'_k,c,r+1,t)$ ~~~~~~~~~~~{\small \{Recursive Call\}} 
  \ENDFOR
 \end{algorithmic}
\label{algo:SHC}
\end{algorithm}

\section{SHC: Semi-supervised Hierarchical Clustering } \label{SHC}
The class cluster distributions shown in Fig.~\ref{fig:Fig1Stat} are not conditionally orthogonal because two clusters need further partitioning. 
However, if the number of clusters is blindly increased, other   indivisible clusters  may  get partitioned.  Such a break down of indivisible clusters will reduce the discrimination capability of the CVC algorithm. To this end, we propose a semi-supervised  algorithm based on  two-way hierarchical clustering which can identify indivisible clusters and hence avoid further partitioning of such clusters. A parent cluster is partitioned into two child clusters only if it is divisible.  Once all clusters become {\em indivisible}, the algorithm stops and hence the optimal number of clusters is automatically determined.

 Algorithm \ref{algo:SHC} recursively implements the proposed Semi-supervised Hierarchical Clustering (SHC). For the purpose of dividing data into two clusters, we consider NCut based graph partitioning. This choice is motivated by the robustness of spectral clustering \cite{ElhamPAMI}. In NCut, each data point in the data matrix  $\mathcal{D}$ is mapped to a vertex of a weighted undirected graph $G=(V,E)$, where the edge weights correspond to the similarity between the two vertices~\cite{ngspectral,von2007tutorial}. The adjacency matrix $A \in \mathds{R}^{n_d \times n_d}$ is  computed as  \begin{equation}\label{eq:adj} A_{i,j}=\begin{cases} &\exp{\bigl(-\frac{1}{2}|x_i-x_j|^\top \Sigma^{-1}|x_i-x_j|\bigr)} \text{ if } i\ne j\\ & 0 \text{ if } i = j,\end{cases} \end{equation}where the parameter $\Sigma$ controls the connectivity. The corresponding degree  matrix is defined as \begin{equation}D(i,j)=\begin{cases}&\sum_{i=1}^{n_d} {A(i,j)}\text{ if } i = j\\& 0 \text{ if } i \ne j.\end{cases}\end{equation} 
If $p_r$ and $p_l$ are the two  partitions, the normalized cut (NCut) objective function
~\cite{fowlkes2004spectral,NormCut} is given by \begin{equation}J_{nc}=\sum_{i\in |p_r|, j \in |p_l|}{A(i,j) }(\frac{1}{v_{r}}+\frac{1}{v_{l}}), \label{J_nc}\end{equation} where $v_{r}$ and $v_{l}$ are the volumes of  $p_r$ and $p_l$, given by the sum of all edge weights attached to the vertices in that partition \begin{equation} v_{r}=\sum\limits_{i\in p_r}{D(i,i)}, v_{l}=\sum\limits_{i\in p_l}{D(i,i)}.\end{equation} 
Shi and Malik~\cite{NormCut} have shown that the NCut objective function is equivalent to \begin{equation}J_{nc}=\frac{y_{r}^\top(D-A)y_{r}}{y_{r}^\top Dy_{r}},\end{equation} where $y_{r}$ is an indicator vector \begin{equation}y_{r}(i)=\begin{cases}&v_l \text{ if } i \in p_r\\&- {v_{r}}{} \text{ if } i \in p_l.\end{cases}\end{equation}

An exhaustive search for a $y_r$, such that $J_{nc}$ is minimized, is NP complete. However, if $y_r$ is relaxed to have real values then an approximate solution can be obtained by the generalized eigenvalue problem \begin{equation}(D-A)\hat{y}_{r}=\lambda D \hat{y}_{r}.\end{equation} Transforming to the standard eigen-system \begin{equation}D^{-\frac{1}{2}}(D-A)D^{-\frac{1}{2}}\hat{q}_{r}=\lambda  \hat{q}_{r},\end{equation} where $\hat{q}_{r}=D^{\frac{1}{2}}\hat{y}_{r}$ is an approximation to $q_r$ and \begin{equation}L_{s}=D^{-\frac{1}{2}}(D-A)D^{-\frac{1}{2}}\end{equation} is a symmetric normalized Laplacian matrix. 
 For a connected graph, only one eigenvalue of $L_s$ is zero which corresponds to the eigenvector given by $\textbf{u}_n=\sqrt{\text{diag}(D)}$ and $\lambda_n=\textbf{u}_n^\top L_s \textbf{u}_n={0}$.

The smallest non-zero eigenvalue $\lambda_{n-1}$ of $L_s$ represents the algebraic connectivity of the graph~\cite{fiedler1973algebraic} and the corresponding eigenvector is known as the Fiedler vector \begin{equation}\textbf{u}_{n-1}=\min_{\textbf{u} \perp \textbf{u}_n} (\textbf{u}^\top L_s \textbf{u}).\end{equation}
The signs of the Fiedler vector can be used to divide the graph into two spectral partitions
 \begin{equation}\ell(x_i)=\begin{cases}& p_r \text{  if  } \textbf{u}_{n-1}(i)\ge0\\&p_l \text{  if  } \textbf{u}_{n-1}(i)<0.\end{cases}\end{equation}

Algorithm \ref{algo:SHC} combines the NCut based graph partitioning with semi-supervised hierarchical clustering. It takes the proximity matrix $A$ defined in \eqref{eq:adj}, class labels $\ell_\mathcal{D}$, cluster labels matrix $\ell_k$, current cluster ID $c$, and the current recursion level $r$. The dummy label given to the test set $\widehat{\ell}_p$ is unique from the class labels. Initially $\ell_\mathcal{k}$ is set to all ones, $c=1$, $r=1$ and at each iteration, the algorithm updates these values.  The algorithm stops when all clusters become {\em indivisible}.

We perform an experiment on synthetic data comprising two random 3D Gaussian clusters with means $\mu_1=[0~ 0~ 0]^\top$ and $\mu_2=[d~ 0~ 0]^\top$, where $d$ is the distance between the cluster centers which is varied from $2$ to $7$ (Fig. \ref{fig:twoClusters}a). Both clusters have the same variance $\sigma I$, where $\sigma=2$ and $I$ is a $3\times 3$ identity matrix. The size of each cluster grows from $100$ to $500$ data points in steps of $25$ points. NCut is applied to the data and the relationship between different parameters is analyzed in Figure \ref{fig:twoClusters}. Notice how the value of $\lambda_{n-1}$ remains fairly stable, given a fixed $d$, for increasing number of data points. This empirically varifies that the value of $\lambda_{n-1}$ is a close approximation of the NCut objective function \eqref{J_nc} \cite{NormCut}. Comparing Fig.~\ref{fig:twoClusters}b and \ref{fig:twoClusters}c, we notice that the magnitude of  $\lambda_{n-1}$ is also proportional to the classification error.  


A sign change in the Fiedler vector $\textbf{u}_{n-1}$  basically means that the corresponding data point has moved from one cluster to the other. Since Fiedler vector is computed iteratively, we can count the  number of sign changes till convergence (Section \ref{sec:DFV}). Hence, we can find the number of points switching partitions and the number of times each point switches partitions. We observe, in Fig. \ref{fig:twoClusters}d, that the average number of sign changes (points switching partitions) reduces as $\lambda_{n-1}$ reduces (i.e. $d$ increases). Figure \ref{fig:twoClusters}e  shows the classification error only for those points that swith partitions 0, 1 and 2 times. We observe  that those data points which change signs (switch partitions) more frequently have higher error rates.  

If a parent cluster is divided into two balanced child clusters, the size of each child's local proximity matrix is four times smaller than the parent's matrix. Since the complexity of eigenvector solvers is $O(n^3)$, the complexity reduces in the next iteration by $O((\frac{n}{2})^3)$ for each sub-problem. The depth of the recursive tree is $\log_2(n_d)$, however, the proposed supervised stopping criterion does not let the iterations to continue until the very end.  The process stops as soon as all clusters are indivisible. A computational complexity analysis of the recursion tree reveals that the overall complexity of the eigenvector computations remains the same, $O(n_d^3)$. 
 
\begin{figure}
\centering
\includegraphics[width=8.5cm]{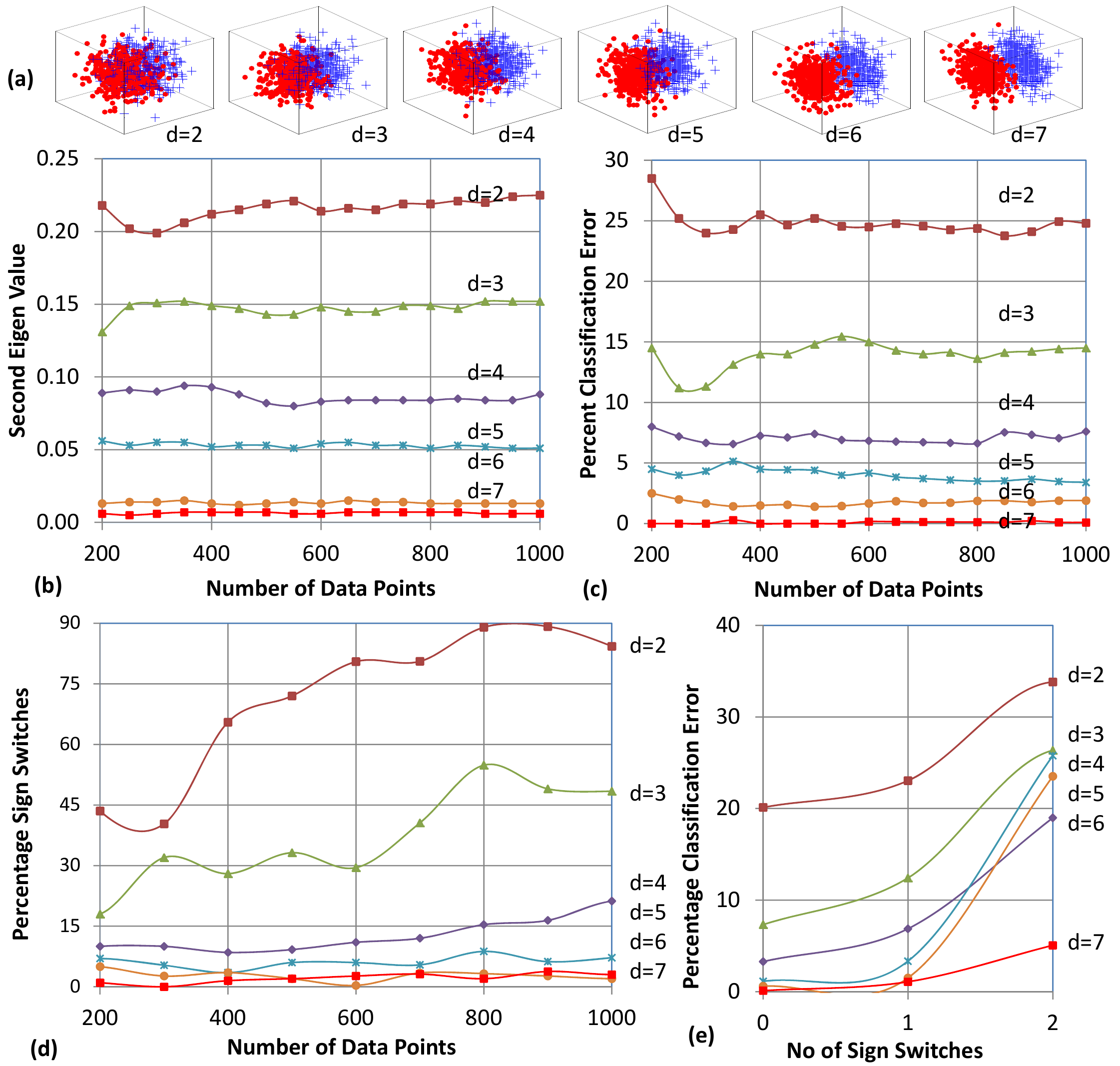}
\caption{(a) 3D Gaussian clusters with center to center distance varying from $d=2$ to $d=7$ units. (b) Variation of algebraic connectivity $\lambda_{n-1}$ with graph size and distance $d$. (c) Classification error versus the graph size and  $d$ (d) Average number of sign switches of Fiedler vector coefficients versus the graph size and $d$  (e) Classification error is more in data points which change signs (switch partitions) more frequently.}
\label{fig:twoClusters} \end{figure}

\subsection{Proximity Matrix Computation through Regularized Linear Regression}
Often high dimensional data sets lie on  low dimensional manifolds. In such cases, the Euclidean distance based adjacency matrix \eqref{eq:adj} may not be an effective way to represent the geometric relationships among the data points. A more viable option is the sparse representation of data which has been used for many tasks including label propagation~\cite{SparseLabel}, dimensionality reduction, image segmentation and face recognition~\cite{Wright_TPAMI_2009}. Recently, Elhamifar and Vidal ~\cite{ElhamPAMI} proposed sparse subspace clustering which can discriminate data lying on independent and disjoint subspaces. 

A vector can only be represented as a linear combination of other vectors spanning the same subspace. Therefore, the proximity matrices based on linear decomposition of data lead to subspace based clustering. 
Representing a data point $x_i$ as a linear combination of the remaining data points $ (\mathcal{\hat{D}})$ ensures that zero coefficients will only correspond the points spanning  subspaces independent to the subspace spanned by $x_i$. Such a decomposition may be computed with least squares: $\alpha_i=(\mathcal{\hat{D}}^\top \mathcal{\hat{D}})^{-1}\mathcal{\hat{D}}^\top x_i$, where $\alpha$ are the linear coefficients. For high dimensional data,  $\mathcal{\hat{D}}^\top \mathcal{\hat{D}}$ can be rank deficient.  Inverse may be computed through eigen decomposition: $D^\top D=USU^\top$, where $U$ are the eigenvectors of $D^\top D$ and $S$ is the diagonal matrix of singular values. By selecting non-zero singular values  and the corresponding eigenvectors, $D^\top D=\hat{U}\hat{S}\hat{U}^\top$ and $\alpha_i=\hat{U}\hat{S}^{-1}\hat{U}^\top\mathcal{\hat{D}}^\top x_i$.

One may introduce sparsity by only considering the $w$ largest coefficients in $\alpha_i$ and forcing the rest to zero. An alternate approach is to use $\ell_1$ regularized linear regression also known as Lasso~\cite{Tibshirani94} 
\begin{equation}\label{eqn:lasso2}\alpha_i^* \defeq \min_{\alpha_i} \left(\frac{1}{2}||x_i-\mathcal{\hat{D}}\alpha_i||^2_2 + w||\alpha_i||_1  \right)~,
\end{equation} where $||\alpha_i||_1$ approximates the sparsity induction term and $w_i>0$ is the relative importance of the sparsity term. In \eqref{eqn:lasso2}, $\ell_1$ is a soft constraint on sparsity and the actual sparsity (number of non-zero coefficients) may vary. To ensure a fixed sparsity linear regression 
\begin{equation}\label{eqn:womp}\alpha_i \defeq \min_{\alpha_i} \left(\frac{1}{2}||x_i-\mathcal{\hat{D}}\alpha_i||^2_2\right) \text{such that} ||\alpha_i||_o \le w, \end{equation} can be used. We observed that while this approach is faster, it yeilds less accuracy compared to the $\ell_1$ formulation.

To complete the proximity matrix $S$, the same process is repeated for all $x_i$ and the corresponding  $\alpha_i$  are appended as columns 
$S=\{\alpha_i\}_{i=1}^{n_d}\in\mathds{R}^{{n_d}\times
{n_d}}$. Some of the $\alpha$ coefficients may be negative and in general $S(i,j) \ne S(j,i)$. Therefore, a symmetric sparse LS proximity matrix is computed as $A=|S|+|S^\top|$ for spectral clustering.

\section{Spectral Clustering on Grassmannian Manifolds}
Eigenvectors computation of large Laplacian matrices $L_{s}\in\mathds{R}^{n_d\times n_ d}$ incurs high computational cost.  A naive approach to reduce the cost is to compute eigenvectors for randomly sampled columns of $L_{s}$  and  extrapolate to the rest of the data~\cite{fowlkes2004spectral,pengscalable,Kwok}.  Instead, we replace each image-set $X_i=\{x_j\}_{j=1}^{n_i} \in\mathds{R}^{l\times n_i}$ by a compact representation and perform clustering  on the  representation. Our choice of compact representation is motivated from linear subspace based image-set representations~\cite{wang2012manifold,Kim_TPAMI_2007}. These subspaces can be considered as points on Grassmannian manifolds~\cite{hamm2008grassmann,harandi2011graph}. While others perform discriminant analysis on Grassmannian manifolds or compute manifold to manifold distances, we propose sparse spectral clustering on Grassmannian manifolds.

A set of $d$-dimensional linear subspaces of $\mathds{R}^{n}$, $n=\min(l,n_i)$  and $d \le n$, is termed the Grassmann manifold $Grass(d, n)$~\cite{grassmann}. An element $\mathcal{Y}$ of $Grass(d, n)$ is a $d$-dimensional subspace which can be specified by a set of $d$ vectors: $Y=\{y_1, . . . , y_d\}\in \mathds{R}^{l\times d}$ and $\mathcal{Y}$ is the set of all linear combinations. For each data element of the image-set, we compute a set of basis ${Y}$ and the set of all such ${Y}$ matrices is termed as a non-compact Stiefel manifold  $ST(d,n)\defeq \{{Y} \in R^{l \times d}: \text{ rank }(\mathcal{Y} ) = d\}$.  We arrange all the $Y$ matrices in a basis matrix $B$ which is capable of representing each data point in the image-set by using only $d$ of its columns. For the $i^{th}$ data point in the $j^{th}$ image-set $x_j^i \in X_j$, having $B_j$ as the basis matrix, $x_j^i=B_j\alpha_j^i$, where $\alpha_j^i$ is the set of linear parameters with $|\alpha_j^i|_o=d$. For the case of known $B_j$,  we can find a matrix $\alpha_j =\{\alpha_j^1, \alpha_j^2, \cdots \alpha_j^{n_i}\}$  such that  the residue is minimized  \begin{equation}\label{eqn:lasso}  \min_{\alpha_j}(\sum\limits_{i=1}^{n_i}||x_j^i-B_j\alpha_i^j||^2_2) \text{ ~~s.t. } ||\alpha_j||_o \le d ~. \end{equation} Using the fact that $\ell_o$ norm can be approximated by $\ell_1$ norm, we can estimate both $\alpha_j$ and $B_j$ iteratively by using the following objective function~\cite{Lee07efficientsparse}\begin{equation}\label{eqn:loss}\min_{\alpha_j,B_j}\left(\frac{1}{n_j}\sum_{i=1}^{n_j} {\frac{1}{2}||x_j^i-B_j\alpha_j^i||^2_2 \text{   ~~s.t. }||\alpha_j^i||_1 \le  d}\right)~. \end{equation} The solution is obtained by randomly initializing $B_j$ and computing $\alpha_j$, then fixing $\alpha_j$ and recomputing $B_j$ until convergence. The columns  of $B_j$  are significantly smaller than the  number of data points $n_j$ in the corresponding image-set leading to computational cost reduction. Interestingly, this compact representation also increases the classification accuracy of the proposed CVC algorithm because the underlying subspaces of each class are robustly captured in $B$ while the outliers are discarded.

\subsection{Ensemble of Spectral Classifiers}\label{sec:ensemble} Representing  image-sets with Grassmannian manifolds facilitates the formation of an ensemble of spectral classifiers. For an image-set $X_i$ varying the random initializations and dimensionality of $B_i$ in \eqref{eqn:loss} will converge to different solutions.  During off line training, we compute a set of manifolds of varying dimensionality for each image-set $X_i \equiv$ \{$B_i^1,B_i^2,...B_i^\kappa$\}. Spectral clustering is independently performed on each set of manifolds of the same dimensionality  $\{B_1^j,B_2^j,...B_{g}^j,B_{g+1}^j\}$, where $B_{g+1}^j$ is the probe set representation.

The distance of each training set is computed from the probe set. For example,  Bhattacharyya distance of image-set $X_i$ with representation $B_j$ from the probe set $X_p$ is given by  $\mathcal{B}_{i,j,p}$ \eqref{bhat}. For the $j^{th}$ manifold representation, we get a set of distances $\mathcal{B}_{j,p}\in \mathds{R}^{g}$. We consider two simple fusion strategies, sum rule and mode or maximum frequency rule. In sum rule, we sum all the distance vectors $\mathcal{B}_p=\mathcal{B}_{1,p}+\mathcal{B}_{2,p}+...\mathcal{B}_{\kappa,p}$ and  the probe set label corresponds to the gallery set with minimum overall distance  $L_p\equiv\min_i(\mathcal{B}_p(i))$.

In mode based fusion, for the $j^{th}$ classifier, probe set label is independently estimated as the label of  $i^{th}$ gallery set with minimum distance: $L_{j,p}^i\equiv\min_i (\mathcal{B}_{j,p}(i))$, and the final label is the mode of all labels: $L_p \equiv$ mode($\{L_{j,p}^i\}_{j=1}^\kappa$). We empirically observe that the mode based fusion is more robust in case of noisy image-sets and often generates more accuracy than the sum rule based fusion. 

\section{Direct Fiedler Vector Computation}
\label{sec:DFV}

Spectral clustering research is based on generic eigensolvers. As discussed in Section \ref{SHC}, the signs of the  eigenvector coefficients  are more important than the numerical accuracy of their magnitudes. Therefore, while iteratively computing eigenvectors, we enforce the convergence of the signs of eigenvector coefficients instead of the eigenvalues. The iterations terminate when the number of sign changes reduces below a threshold.

We consider power iterations, a fundamental algorithm for  eigen computation~\cite{Matrix}. To find an eigenvector of  $L_s$, a random vector $v^{(0)}_1$ is repeatedly multiplied with $L_s$ and normalized until after $k$ iterations $L_sv^{(k)}_1=\lambda_1 v^{(k)}_1$, where $\lambda_1$ is the maximum eigenvalue and  $u_1=v^{(k)}_1$ is the most dominant eigenvector of $L_s$. The convergence of the power iterations algorithm depends on the ratio of first two eigenvalues $\lambda_2/ \lambda_1$. If this ratio is very small, the algorithm will converge in few iterations. Once $u_1=v^{(k)}$ is found, $L_s$ is deflated to find the next dominant eigenvector
 \begin{equation}
 L_s^{(i+1)}=L_s^{(i)} -  u_1 u_1^{\top} L_s^{(i)},
 \end{equation}
and the same process is repeated. Thus the computation of eigenvectors proceeds from the most significant to the least significant without skipping any intermediate vector. Such a computational order is very inefficient from the spectral clustering perspective where only the least significant eigenvectors are required. For the specific case of 2-way hierarchical clustering, only the eigenvector corresponding to the second smallest eigenvalue is required. This is the main reason why hierarchical spectral clustering was not previously considered a viable option \cite{NormCut}. We solve this problem by proposing Algorithm \ref{algo:Fiedler} for the direct computation of the Fiedler vector using an inverse iteration method~\cite{Matrix}. Moreover, since the signs are more important than the numerical values, our proposed algorithm stops when most of the signs of the Fiedler vector stabilize, further reducing the computational cost.

\begin{algorithm}[t]
\caption{Direct Fiedler Vector Computation}
\begin{algorithmic}
 \REQUIRE $L_s$, $u_n$, $\eta$, $\epsilon_s$ 
 \ENSURE $u_{n-1}, \lambda_{n-1}$ \{Fiedler Vector and Value \}
 \STATE $\hat{L}_s\leftarrow L_s-u_n \textbf{1}^{1\times n}-\eta I$ \{Over Deflation and Eigen-shift\}
 \STATE $Q_sR_s \leftarrow \hat{L}_s$ \{QR Decomposition\}
 \STATE $v^{(0)}\leftarrow\sum_{i=1}^{n-1}{q_i}-u_n$, $v^{(0)}\leftarrow{v^{(0)}}/{||v^{(0)}||_2}$ 
 \STATE $\Delta_s\leftarrow \epsilon_s+1$, $k\leftarrow1$
 \WHILE{$\Delta_s \ge \epsilon_s$}
       \STATE $v^{(k')}\leftarrow Q_s^\top v^{(k-1)}$
       \STATE $w\leftarrow\textbf{0}^{n\times 1}$
       \FORALL {$i\leftarrow n:-1:1$}
       \STATE $w(i)\leftarrow(v^{(k')}(i)-R_s(i,:)w)/R(i,i)$
       \ENDFOR
     \STATE $v^{(k)}\leftarrow{w}/{||w||_2}$
     \STATE $\Delta_s \leftarrow \sum{(v^{(k)}>0)\oplus (v^{(k-1)}>0)}$
     \STATE $k\leftarrow k+1$
      \ENDWHILE
         \STATE $u_{n-1}\leftarrow v^{(k)}$, $\lambda_{n-1}\leftarrow u_{n-1}^\top L_s u_{n-1}$ 
 \end{algorithmic}
\label{algo:Fiedler}
\end{algorithm}

Graph Laplacian is a symmetric matrix, therefore its eigen decomposition is $L_s=U^\top \Lambda U$ and $L_s^{-1}=U^\top \Lambda^{-1} U$. For a scalar  $\eta$ and identity matrix $I$ one can show that
\begin{equation}
(L_s-\eta I)^{-1}=U^\top(\Lambda-\eta I)^{-1}U.
\end{equation}
That is, the eigenvectors of $(L_s-\eta I)^{-1}$ are the same as that of $L_s$ while the eigenvalues  are $1/(\lambda_i-\eta)$. If $\eta$ is the same as the k$^{th}$ eigenvalue, then $1/(\lambda_k-\eta)\rightarrow \infty$ and will become significantly larger than all other eigenvalues. If power iteration method is applied to $(L_s-\eta I)^{-1}$, it will converge  in only one iteration. However, if the difference between $\eta$ and $\lambda_k$ is relatively large, power iteration will converge depending on the ratio of $(\lambda_j-\eta)/(\lambda_k-\eta)$, where $\lambda_j$ is  the next closest  eigenvalue. 

To avoid matrix inversion, $(L_s-\eta I)^{-1}$, the matrix vector multiplication step in the power iteration method $$v^{(k)}=(L_s-\eta I)^{-1}v^{(k-1)}$$
is modified as 
\begin{equation}
(L_s-\eta I)v^{(k)}=v^{(k-1)}.
\end{equation} 
To find $v^{(k)}$ we solve a set of linear equations. For this purpose, we use QR decomposition $Q_sR_s=(L_s-\eta I)$, where $Q_s$ is an orthonormal matrix and $R_s$ is an upper triangular matrix. Therefore, the following system of equations
\begin{equation}R_{s} v^{(k)}=Q_{s}^\top v^{(k-1)}\end{equation} is upper triangular and we use back substitution to efficiently solve this system. 

If $\lambda_k$ is known, we can directly compute the $k^{th}$ eigenvector by setting $\eta=\lambda_k$. Otherwise, for a given $\eta$, the algorithm will find an eigenvector with eigenvalue closest to $\eta$. For direct computation of the Fiedler vector, we need an estimate of $\lambda_{n-1}$. Unfortunately, in the spectral clustering literature~\cite{chung1997spectral}, sufficiently tight and easy to compute bounds on $\lambda_{n-1}$ have not been found for generic graphs. The existing bounds are either loose or require significant computational complexity. 

For a connected graph, $\lambda_{n}=0$ and $\lambda_{n-1}>\lambda_n$, for a very small value of $\eta$, the inverse power iteration algorithm may converge to the zero eigenvalue,  while setting a relatively larger value of $\eta$, the algorithm may converge to another eigenvalue $\lambda>\lambda_{n-1}$. We solve this problem by  over-deflating $L_s$ with   $u_n=\sqrt{diag(D)}$, where $D$ is the degree matrix \begin{equation}\hat{L}_s=L_s-u_n 1^{1\times n}.\end{equation} Note that a conventional deflation approach cannot be used because $u_n$ is a null vector  of $L_s$. Over-deflation ensures that the proposed Algorithm 3 does not converge towards $u_n$ no matter how small the value of $\eta>0$ is used. 
   
To find a rough estimate of $\eta$ we use the bound   $ \lambda_{n-1}\ge{1}/{\varphi v_g}$~\cite{chung1997spectral}, 
where $\varphi$ is the diameter of the graph and $v_g$ is the volume of the graph. Note that $v_g$ can be easily computed from the degree matrix while no simple method exists for the computation of $\varphi$, the weight of the longest path in the graph. We use the fact that $\varphi \le v_g$, therefore  $ \lambda_{n-1}\ge{1}/{ v_g^2}$. Which is a loose bound but readily known.   
   
After convergence, Algorithm 3 can find the exact value of $\lambda_{n-1}$ however, we are not interested in the exact solution for the purpose of spectral clustering.  Rather we are  interested in the signs which represent the cluster occupancy.  We compute the cluster indicator vector $q^{k}=v^{(k)}>0$. Instead of setting the convergence criteria on $|\lambda_{n-1}^k-\lambda_{n-1}^{k-1}|$, the iterations are stopped when $\Delta_s=\sum (q^{(k)} \oplus q^{k-1})<\epsilon_s$, where $\oplus$ is XOR operation and $\epsilon_s$ is a user defined threshold. The value of $\Delta_s$ also represents the quality of the NCut. A larger value shows that there are many data points which are switching across the partition in consecutive iterations and a good cut is not found.

\section{Experimental Evaluation}
The proposed algorithms were evaluated for image-set based face recognition, object categorization and gesture recognition.  Comparisons are performed with seven existing state of the art image-set classification algorithms including Discriminant
Canonical Correlation (DCC)~\cite{Kim_TPAMI_2007}, Manifold-Manifold Distance (MMD)~\cite{wang2012manifold}, Manifold
Discriminant Analysis (MDA)~\cite{Wang_CVPR_2009}, linear Affine and Convex
Hull based image-set Distance (AHISD, CHISD)~\cite{Cevikalp_CVPR_2010}, Sparse Approximated Nearest Points~\cite{Hu_TPAMI_2012}, and Covariance Discriminative Learning (CDL)~\cite{CDL}.  The same experimental protocol was replicated for all algorithms. The implementations of~\cite{Kim_TPAMI_2007, wang2012manifold, Cevikalp_CVPR_2010, Hu_TPAMI_2012} were provided by the original authors whereas the implementation of~\cite{Wang_CVPR_2009} was provided by the authors of Hu et al.~\cite{Hu_TPAMI_2012}. We carefully implemented CDL which was verified by the original authors of the algorithm~\cite{CDL}. The parameters of all methods were carefully optimized to maximize their accuracy. For DCC, the embedding space dimension was set to 100, the subspace dimensionality was set to 10 and set similarity was computed from the 10 maximum correlations. For MMD and MDA, the parameters were selected as suggested in \cite{Wang_CVPR_2008} and~\cite{Wang_CVPR_2009}.

Each image-set was represented by $\nu$ manifolds with dimensionality increasing from 1 to $\nu$. Each set of equal dimensionality manifolds was used to make an independent classifier. The results of $\nu$ classifiers were fused using sum and mode rules. Table \ref{tab:CVC} shows the results of the proposed CVC algorithm  using Semi-supervised Hierarchical Clustering (SHC) for the two fusion schemes over all datasets. The proximity matrix was based on the $\ell_1$ norm regularized linear regression \eqref{eqn:lasso2} and solved using the SPAMS~\cite{Mairal1} library with $w=0.01$~\cite{MyBMVC12}. For eigenvector computation, the proposed DFVC Algorithm~\ref{algo:Fiedler} was used.  DFVC was taken to be converged when the sign changes reduced to $ \le 1.00$\%. The code, manifold basis and clustering results will be made available at \url{http://www.csse.uwa.edu.au/~arifm/CVC.htm}.
\begin{table*}
  \centering
 \caption{Accuracy (\%) of CVC algorithm  using Mode fusion, Sum fusion and No-Fusion (NFus) schemes.}
    \begin{tabular}{lrrrrrrrrrrrrrrr}
    \toprule
     & \multicolumn{3}{c}{YouTube~\cite{Kim_CVPR_2008}} & \multicolumn{3}{c}{Mobo~\cite{Gross_TR_2001}} & \multicolumn{3}{c}{HONDA~\cite{Lee_CVPR_2003} }  & \multicolumn{3}{c}{ETH 80~\cite{ETH80} } & \multicolumn{3}{c}{Cambridge~\cite{Camdatabase}}\\
                & Mod & Sum    &NFus  & Mod & Sum     &NFus & Mod& Sum    &NFus  & Mod& Sum    &NFus  & Mod & Sum    &NFus  \\ 
        \cmidrule(r{5pt}l{5pt}){2-4}  \cmidrule(r{5pt}l{5pt}){5-7}   \cmidrule(r{5pt}l{5pt}){8-10} \cmidrule(r{5pt}l{5pt}){11-13} \cmidrule(r{5pt}l{5pt}){14-16}  
    Avg   & 76.03 & 70.28 & 62.27 & 98.33 & 96.39 & 96.53 & 100   & 95.13 & 94.36 & 91.75 & 88.75 & 84.75 & -     & -     & - \\

    Min   & 70.56 & 64.89 & 53.19 & 97.22 & 93.06 & 94.44 & 100   & 89.74 & 89.74 & 87.5  & 75.0    & 77.5  & -     & -     & - \\
    Max   & 81.91 & 76.59 & 66.31 & 100.0   & 98.61 & 100   & 100   & 100   & 100   & 100   & 97.50  & 92.50  & 83.10 & 76.53& 74.58 \\
    STD   & 4.69 & 5.38 & 5.33 & 0.88 & 2.09  & 1.78  & 0.00     & 2.91  & 3.38  & 4.42  & 6.80   & 5.33  & -     & -     & - \\

    \bottomrule
    \end{tabular}
    
  \label{tab:CVC}
\end{table*}%
\begin{figure}
\centering
\includegraphics[width=8.5cm]{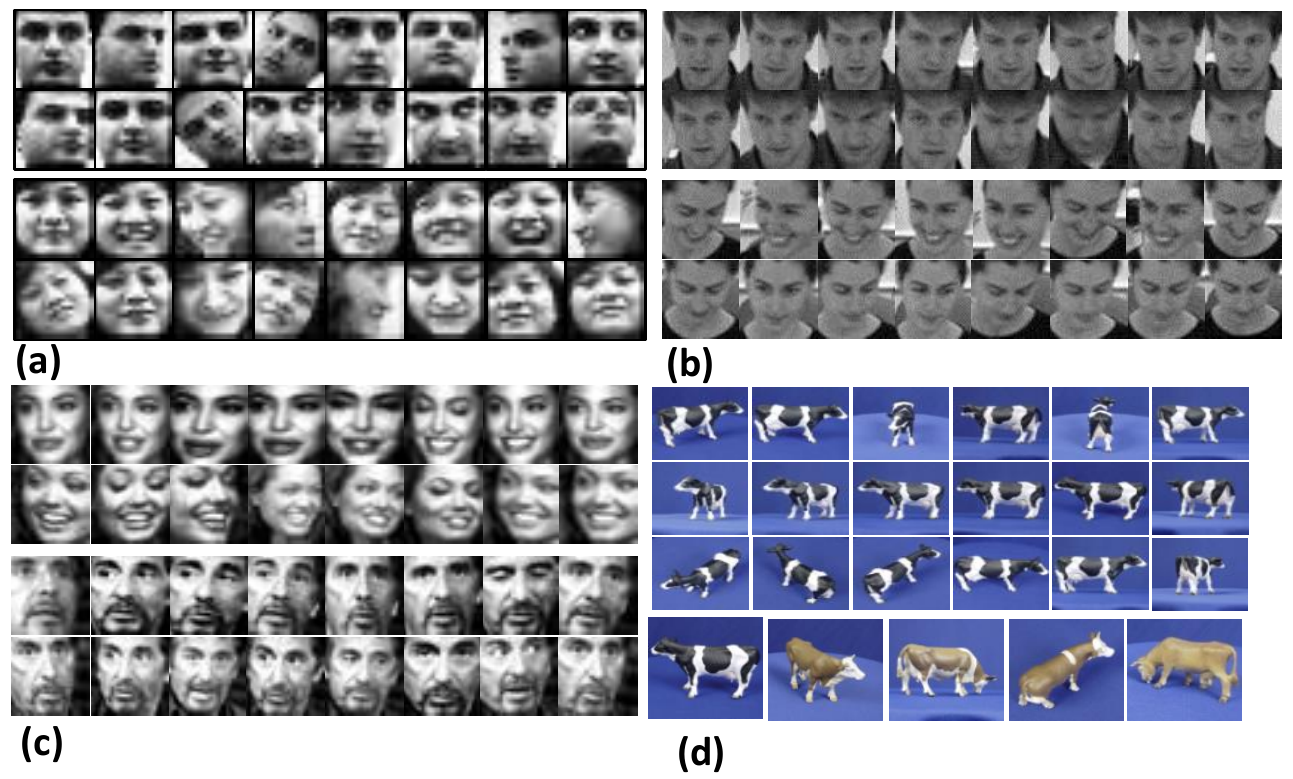}
\vspace{-6mm}
\caption{Two example image-sets from (a) Honda, (b) CMU Mobo
and (c) You-tube Celebrities datasets. (d) One object category from the ETH 80 dataset.}
\label{fig:dataset} 
\vspace{-3mm}
\end{figure}
\subsection{Face Recognition using Image-sets}
Our first dataset is the You-tube Celebrities~\cite{Kim_CVPR_2008} which is very challenging and includes 1910 very low resolution videos (of 47 subjects) containing motion blur, high compression, pose and expression variations (Fig.~\ref{fig:dataset}c). Faces were automatically detected, tracked  and cropped to 30$\times$30 gray-scale images. Due to tracking failures our sets contained fewer images (8 to 400 per set) than the total number of video frames. Experiments were performed on both intensity based features and HOG features. The proposed algorithm performed better on the HOG features. Five-fold cross validation experiments were performed where 3 image-sets were selected for training and the remaining 6 for testing.  

Each image-set was represented by two Grassmannian manifold-sets for $\lambda=\{1, 2\}$ in \eqref{eqn:loss}. Each set had 8 classifiers with dimensionality increasing from 1 to 8.  A different classifier was made for each dimension. Fig. \ref{fig:YoutubeMobo}a shows that the accuracy increases as the dimensionality increases from 1 to 8 for different fusion options.  The average accuracies obtained by mode fusion, sum fusion and no-fusion were \{76.03$\pm$4.69, 70.28$\pm$5.38, 62.27$\pm$5.33\}. Due to noise in the dataset  the mode fusion performed better than the sum fusion. The mode fusion is more robust due to independent decisions at each level while the  errors may get accumulated in sum fusion. Both mode and sum fusions outperformed the no-fusion case. This shows that fusion can be used to compensate for the cases where good quality cuts are not found for graph partitioning. The high dimensional manifolds are more separable in some dimensions and less in the others. The ensemble exploits the more separable dimensions to obtain better accuracy. On this dataset Bhattacharyya and Hellinger distance measures achieved very similar accuracies.  Table \ref{tab:Comp} shows a comparison with the existing state of the art image-set classification algorithms. Note that CVC algorithm with all combinations is more accurate than the previous best reported accuracy. The average accuracy of CVC algorithm is 76.03$\pm$4.69\% outperforming the existing methods by 11.03\%.  
 
\begin{figure}
\centering
\includegraphics[width=8.6cm]{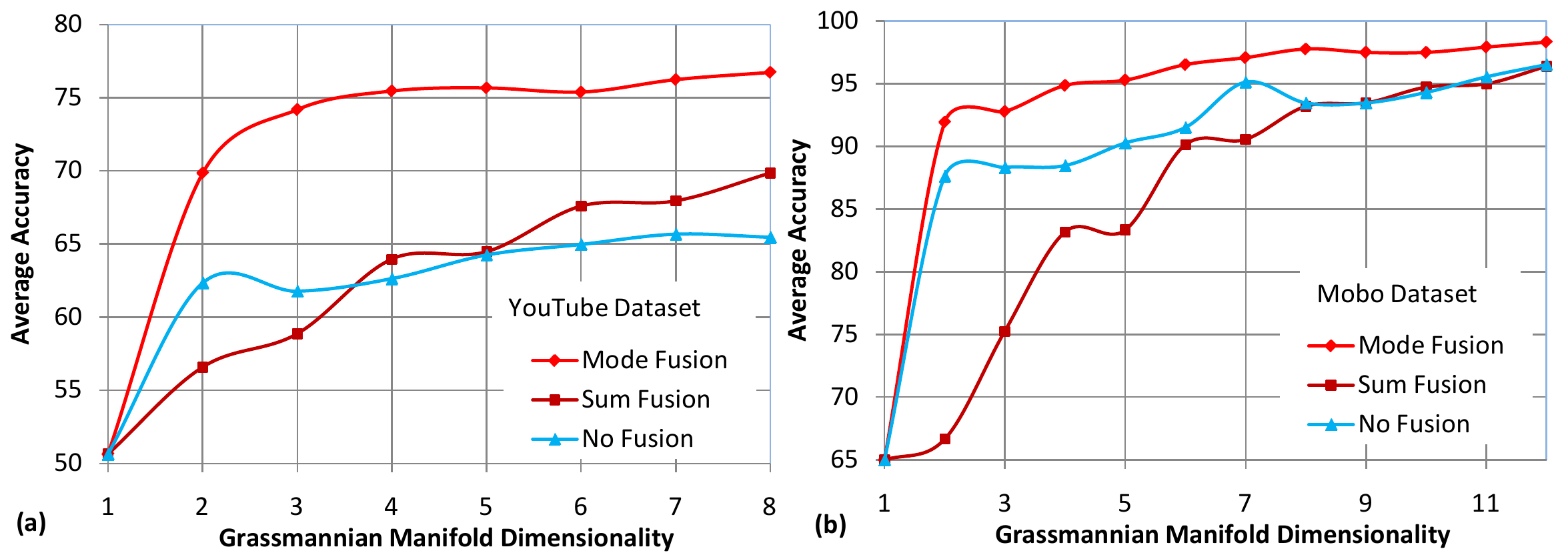}
\caption{CVC Algorithm performance on (a) YouTube and (b) CMU Mobo datasets.}
\label{fig:YoutubeMobo} 
\vspace{-5mm}
\end{figure}

\begin{figure}
\centering
\includegraphics[width=8.6cm]{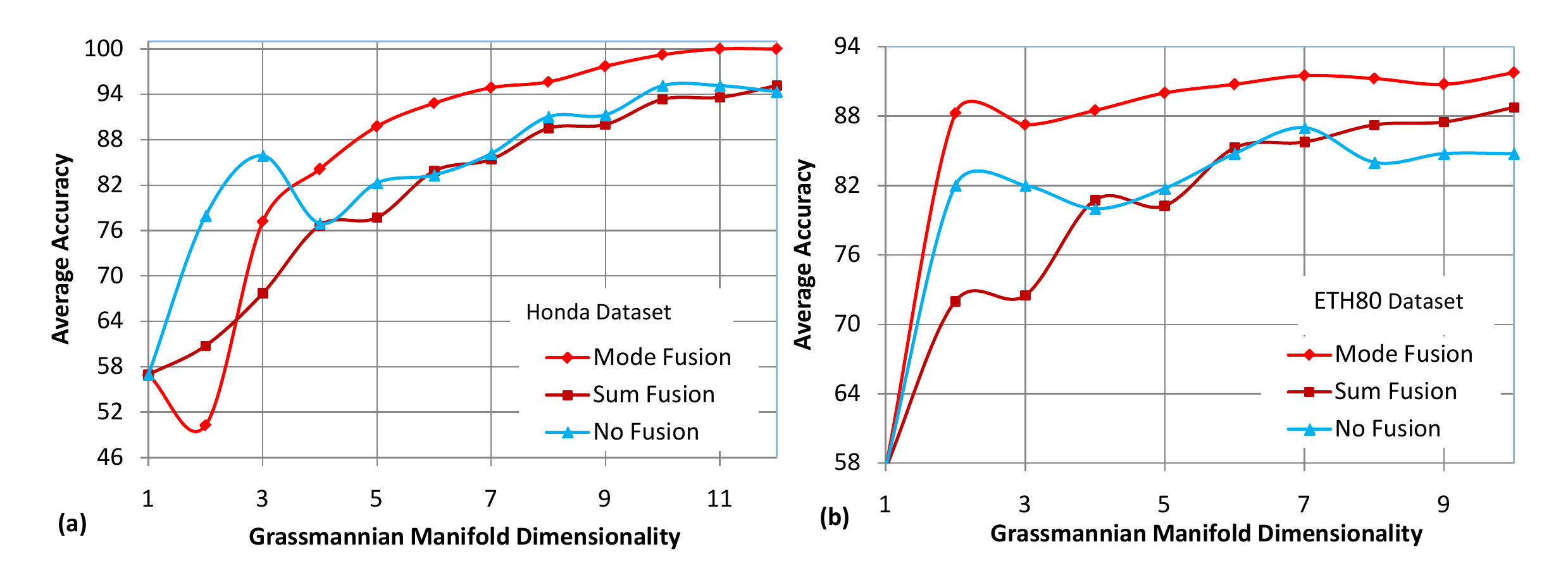}
\caption{CVC algorithm performance on (a) Honda and (b) ETH80 datasets. }
\label{fig:MoboG} 
\end{figure}

The second dataset we used is CMU Mobo~\cite{Gross_TR_2001} containing 96 videos of 24 subjects. Face images were resized to $40\times 40$ and LBP features were computed using circular (8, 1) neighborhoods extracted from $8\times 8$ gray scale patches similar to \cite{Cevikalp_CVPR_2010}. We
performed 10-fold experiments by randomly selecting one image-set per subject as training and the
remaining 3 as probes. This experiment was repeated by varying the manifold dimensionality from 1 to 12. The number of classifiers in the ensemble also vary from 1 to 12. Fig. \ref{fig:YoutubeMobo}b shows the accuracy of CVC algorithm versus the number of classifiers.  Most of the accuracy is gained by the first eight classifiers and any further increase provides minor improvement.  

For the Mobo dataset, the mode fusion again performed better than the sum fusion (Table \ref{tab:CVC}). The improvement over sum rule is relatively smaller compared to the Youtube dataset because Mobo dataset is less noisy. Thus, the no-fusion case also has very similar average accuracy to the sum fusion but with a higher standard deviation.  On this dataset, CVC algorithm achieved a maximum of 100\% and average 98.33$\pm$0.88\% accuracy which is again the highest  (Table \ref{tab:Comp}).

Our final face recognition dataset is Honda/UCSD~\cite{Lee_CVPR_2003} containing 59 videos of 20 subjects with varying poses
and expressions. Histogram equalized 20$\times$20 gray scale face image pixel values were used as features~\cite{wang2012manifold}. We performed 10-fold experiments by randomly selecting one set per subject as gallery and the remaining 39 as probes. In this dataset, each image-set was represented by 12 manifolds of dimensionality 1 to 12. The  CVC algorithm achieved 100\% accuracy with mode fusion (Table \ref{tab:CVC}). Note that the accuracies of other algorithms in Table \ref{tab:CVC} for the Honda dataset are different than those reported by their original authors because they are either used a single fold \cite{Hu_TPAMI_2012} or different folds.

\begin{table}[t]
\caption{Average recognition rates comparisons for 10-fold experiments on Honda, Mobo and ETH80, 5-folds on You-Tube and 1-fold on Cambridge dataset.}
\vspace{-6mm}
\begin{center}{
\resizebox{8.5cm}{!}{
\begin{tabular}{lccccc}
\toprule
        &You-Tube  & MoBo &Honda & ETH80  & Cambr. \\
\midrule
    
    DCC   & 53.9$\pm$4.7 & 93.6$\pm$1.8 & 94.7$\pm$1.3 & 90.9$\pm$5.3 & 65 \\

    MMD   & 54.0$\pm$3.7 & 93.2$\pm$1.7 & 94.9$\pm$1.2 & 85.7$\pm$8.3 & 58.1 \\
          
    MDA   & 55.1$\pm$4.5 & 97.1$\pm$1.0 & 97.4$\pm$0.9 & 80.5$\pm$6.8 & 20.9 \\
          
    AHISD & 60.7$\pm$5.2 & 97.4$\pm$0.8 & $\ddagger$89.7$\pm$1.9 & 74.76$\pm$3.3 & 18.1 \\
          
    CHISD & 60.4$\pm$5.9 & 96.4$\pm$1.0 & $\ddagger$92.3$\pm$2.1 & 71.0$\pm$3.9 & 18.3 \\
         
    SANP  & 65.0$\pm$5.5 & 96.9$\pm$0.6 & 93.1$\pm$3.4 & 72.4$\pm$5.0 & 22.5 \\
    CDL   & *62.2$\pm$5.1 & 95.8$\pm$2.0 & \textbf{100$\pm$0.0} & 89.2$\pm$6.8 & 73.4 \\
    Prop. & \textbf{76.03$\pm$4.69} & \textbf{98.3$\pm$0.9} & \textbf{100$\pm$0.0} & \textbf{91.5$\pm$4.4} & \textbf{83.1} \\

\bottomrule
\end{tabular}}

}\end{center}
\label{tab:Comp}
\small{~~~~* CDL results are on different folds therefore, the accuracy is less than that reported by~\cite{CDL}. $\ddagger$ The accuracy of AHISD and CHISD is less than that reported in ~\cite{Cevikalp_CVPR_2010} due to smaller image sizes.} 
\end{table}

\subsection{Object Categorization \& Gesture Recognition}
For object categorization, we used the ETH-80 dataset~\cite{ETH80} containing images of 8 object categories each with 10 different objects. Each object has 41 images taken at different views forming an image-set. We used 20$\times$20 intensity images for classifying an image-set of an object into a category. ETH-80 is a challenging database because it has fewer images per set and significant appearance variations across objects of the same class. Experiments were repeated 10 folds. Each time for each class, 5 random image-sets were used  for training and the remaining 5 for testing. Each image-set was represented with manifolds of dimensionality varying from 1 to 10. 

A comparison of mode, sum and no-fusion schemes is shown in Fig.  \ref{fig:MoboG}b. As the dimensionality of the manifold increases, the accuracy of all fusion schemes increases. The mode fusion decreases at $\nu=3$ showing that the third dimension of the manifold was less discriminative. Despite  variations, mode fusion obtained better accuracy for $\nu>1$. Table \ref{tab:Comp} shows that the CVC algorithm outperformed all other methods on the ETH-80 dataset giving a significant margin from algorithms that performed quite well on the face datasets such as SANP.

\begin{figure}
\centering
\includegraphics[width=8.6cm]{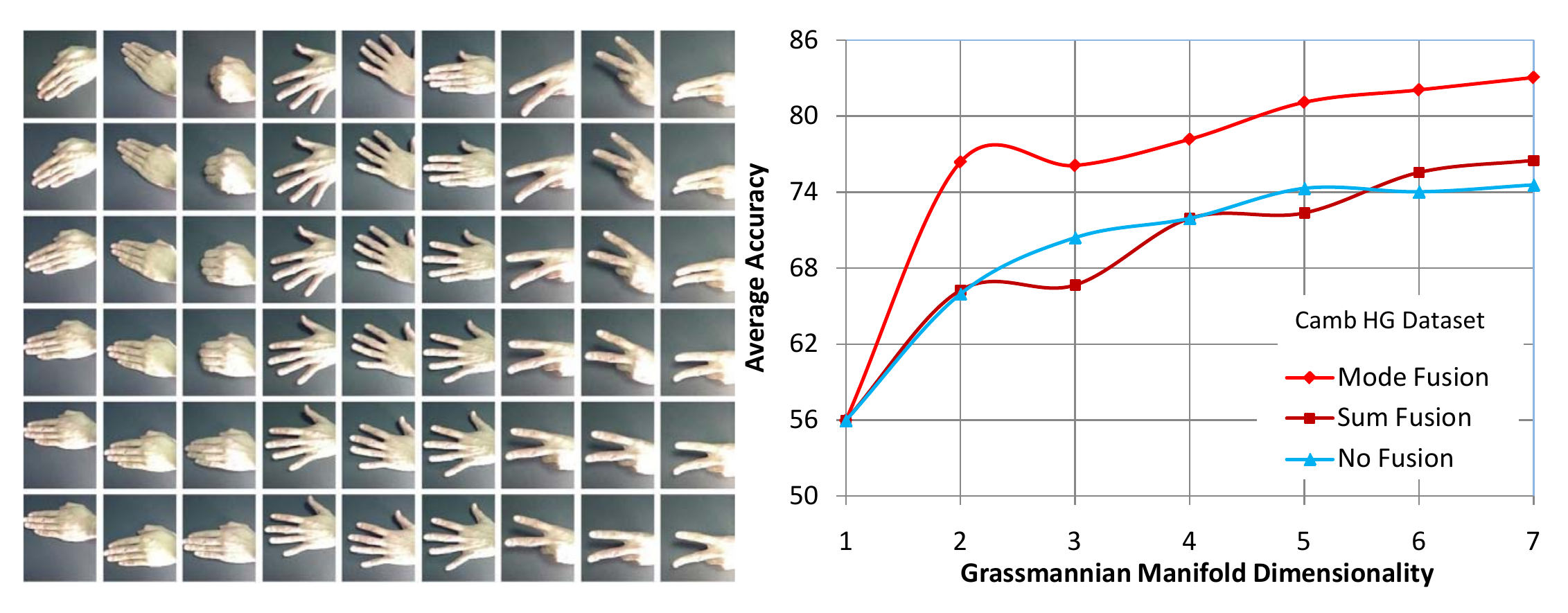}
\caption{(a) Cambridge Hand Gesture dataset. (b) CVC algorithm performance on Cambridge HG dataset.  }
\label{fig:ETHG} 
\vspace{-1mm}
\end{figure}

\begin{table}
  \centering
  \caption{Comparison of CVC algorithm (no-fusion) accuracy by using SHC,  K-Means, SSC~\cite{ElhamPAMI} and SKMS~\cite{Anand}.}
    \begin{tabular}{rcccc}
    \toprule
          & SHC & K-means & SSC   & SKMS \\
    \midrule
    Youtube & 68.44 & 40.85 & 63.83 & - \\ 
    Mobo  & 94.44 & 61.53 & 93.60  & 72.23 \\
    Honda & 97.44   & 29.50  & 88.33 & 69.23 \\
    ETH80 & 90.00  & 73.00    & 87.50 & - \\ 
    Cambridge HG & 74.58  & 73.15  & 79.58 & - \\ 
    \bottomrule
    \end{tabular}%
  \label{tab:cmpSHC}%
\end{table}%

\begin{table}[htbp]
  \centering
  \caption{ Execution time (sec) comparison of SHC algorithm with K-means, SSC and SKMS clustering algorithms when matching a single probe set to the gallery.}
    \begin{tabular}{rrrrr}
    \toprule   
    & SHC   & K-means & SSC   & SKMS \\
           \midrule
    Youtube & 7.30   & 3.17  & 20.57 & 109.34 \\
    Mobo  & 0.43  & 0.24 & 1.08 & 2.97 \\
    Honda & 0.41  & 0.19  & 0.83 & 1.95 \\
    ETH80 & 1.21 & 1.71 & 3.03 & 10.63 \\
    Cambridge HG & 22.75  & 85.72  & 160.00 & 1069.00 \\
    \bottomrule
    \end{tabular}%
  \label{tab:time}%
\end{table}%

Our last dataset is the Cambridge Hand Gesture dataset \cite{Camdatabase}  which contains 900 image-sets of 9 gesture classes with large intra-class variations (Fig. \ref{fig:ETHG}a). Each class has 100 image-sets, divided into two parts, 81-100  were used as gallery and  1-80 as probes~\cite{KimTensor}.  Pixel values of $20 \times 20$ gray scale images were used as feature vectors. Each image-set was represented by a set of manifolds with dimensionality increasing from 1 to 7. 

The accuracy of CVC algorithm using different fusion schemes versus the dimensionality of Grassmannian manifolds is shown in Fig. \ref{fig:ETHG}b. The accuracy of all fusion schemes increased with increasing the number of classifiers. The maximum accuracy of the proposed CVC algorithm was 83.1\% obtained by  mode fusion (Table \ref{tab:CVC}). Table \ref{tab:Comp}) shows that the CVC algorithm outperformed the state of the art image-set classification algorithms by a significant margin. Notice that some of the image-set classification algorithms did not generalize to the hand gesture recognition problem. On the other hand, the proposed algorithm consistantly gives good performance for face recognition, object categorization and gesture recognition.

\subsection{Comparison of CVC when combined with Different Clustering Algorithms}
We compare the performance of CVC algorithm by combining it with different existing clustering algorithms including k-means clustering, Sparse Subspace Clustering (SSC)~\cite{ElhamifarV09} and Semi-supervised Kernel Mean Shift (SKMS) clustering~\cite{Anand} (Table \ref{tab:cmpSHC}). Comparison was performed in the no-fusion setting using Bhattacharyya measure.  For each dataset, a single classifier based on the maximum manifold dimensionality was used. For the case of k-means and SSC the number of clusters were the same as the number of classes in the gallery. For SKMS, the algorithm automatically finds the optimal number of clusters.

For the SSC and the SKMS algorithms, the implementations of the original authors were used.  The proximity matrices proposed by SSC~\cite{ElhamPAMI} was based on the $\ell_1$ norm regularized linear regression \eqref{eqn:lasso2} which was computed by using ADMM~\cite{ADMM} with default parameters recommended by the original authors~\cite{ElhamPAMI}. The SHC proximity matrix was computed with the SPAMS~\cite{Mairal1} library using $w=.01$, Mode=2. The SHC proximity matrix computaion time was significantly less than that required by SSC (Table \ref{tab:time}).

The number of constraints in SKMS were repeated as \{5\%, 10\%, 20\%, 50\%, 100\%\} of the data points in the gallery. For a randomly selected gallery  point, another  random gallery point was  selected and a constraint was defined to show if both points belong to the same class (and hence the same cluster) or not. No constraint was specified for the data points in the test set. For other parameters of the SKMS algorithm, default values were used. We report the best performance of SKMS over these constraints. 

In the SKMS algorithm, the constraints become part of the objective function.  This type of  supervision causes lack of generalization and hence the accuracy reduced. The supervision used in the proposed SHC algorithm keeps the partitioning itself unsupervised while using the labels to decide whether partitioning is required or not. Therefore, the generalization of the SHC algorithm is equivalent to that of the unsupervised clustering algorithms. 

In Table \ref{tab:cmpSHC}, CVC combined with SHC refers to the proposed algorithm wich outperforms others on the first four datasets and has the second best performance on the last dataset.  This  is  because   the class-cluster distributions obtained by the proposed SHC algorithm are conditionally orthogonal reducing the within Gallery similarities and increasing the discrimination across the gallery classes.  CVC algorithm combined with SSC clustering has the highest accuracy on Cambridge dataset and second highest on the remaining datasets.  

CVC combined with SKMS does not perform as good as SHC or SSC because SKMS incorporates the class labels into the objective function. While SKMS may achieve state of the art results for clustering, it is not suitable for Classification Via Clustering (CVC) as it does not expoit the main strength of CVC i.e. to find the label independant internal data structure. Note that we do not report the performance of CVC combined with SKMS on three datasets as these datasets are large and SKMS is computationally demanding (see Table \ref{tab:time}) and we could not fine tune the SKMS parameters.

The performance of CVC combined with k-means is reasonable given the simplicity and fast execution time of k-means (see next section). These results show that CVC is a generic classification algorithm and has good generalization capability when combined with off-the-shelf clustering algorithms especially when the clustering algorithms do not use labels in their objective function.

\subsection{Execution Time Comparisons}

Table \ref{tab:time} compares the execution time of the proposed clustering algorithm SHC with the existing algorithms including k-means, SSC and SKMS on all five datasets. The experiment was performed in no-fusion mode on the maximum dimensionality of the manifold for each dataset. On the average the SHC algorithm is \{1.33, 3.38, 16.47\} times faster than the k-means, SSC and SKMS respectively. Maximum speedups are \{3.77, 7.03, 47.0\}  and the minimum speedups are \{0.434, 2.03, 4.74\} respectively. Maximum speedup is obtained for the Cambridge dataset. The computational advantage of SHC increases with the gallery size.  For SSC, the  proximity matrix computation is significantly slower than that of the proposed SHC algorithm. In SSC, once the proximity matrix is computed, then k-means clustering is used to cluster the rows of the eigenvectors into $n_k$ clusters. In contrast, in the proposed SHC algorithm, the Fiedler vector computation and partitioning are done alternatively. The partitioning step is a binary decision  using  the signs of the Fiedler vector coefficients. The computational cost of the proposed DFVC algorithm reduces if a cut with low cost is present because the algorithm converges very quickly.

The computational cost of the proposed SHC algorithm increases as the dimensionality of the manifold increases. For mode fusion, a faster CVC algorithm can be designed by terminating computations once a decisive number of label agreements are obtained. For example, if  there are $\nu$ classifiers corresponding to 1 to $\nu$ dimensions, then starting from the lowest dimensional classifier, if floor($\nu/2$)+1 classifiers get the same label, the remaining classifiers cannot change the mode and hence the classification decision. Therefore, computations of the remaining classifiers can be safely skipped making the algorithm significantly faster.    

\subsection{Robustness to Outliers}
We performed robustness experiments in a setting similar to~\cite{Cevikalp_CVPR_2010}. The Honda dataset was modified to have 100 randomly selected images per set. In the first experiment, each {\em gallery} set was corrupted by adding 1 to 3 random images from each other gallery set resulting in 19\%, 38\% and 57\% outliers respectively. The proposed CVC algorithm with Semi-supervised Hierarchical Clustering (SHC) achieved 100\% accuracy for all three cases. In the second experiment, the {\em probe} set was corrupted by adding 1 to 3 random images from each gallery set. In this case, the CVC algorithm achieved \{100\%, 100\%, 97.43\%\} recognition rates respectively. The proposed CVC algorithm outperformed all seven algorithms. Fig.~\ref{fig:robustness} compares our algorithm to the nearest  two competitors in both experiments which are~CDL~\cite{CDL} and SANP \cite{Hu_TPAMI_2012}.
\begin{figure}
\centering
\includegraphics[width=8.6cm]{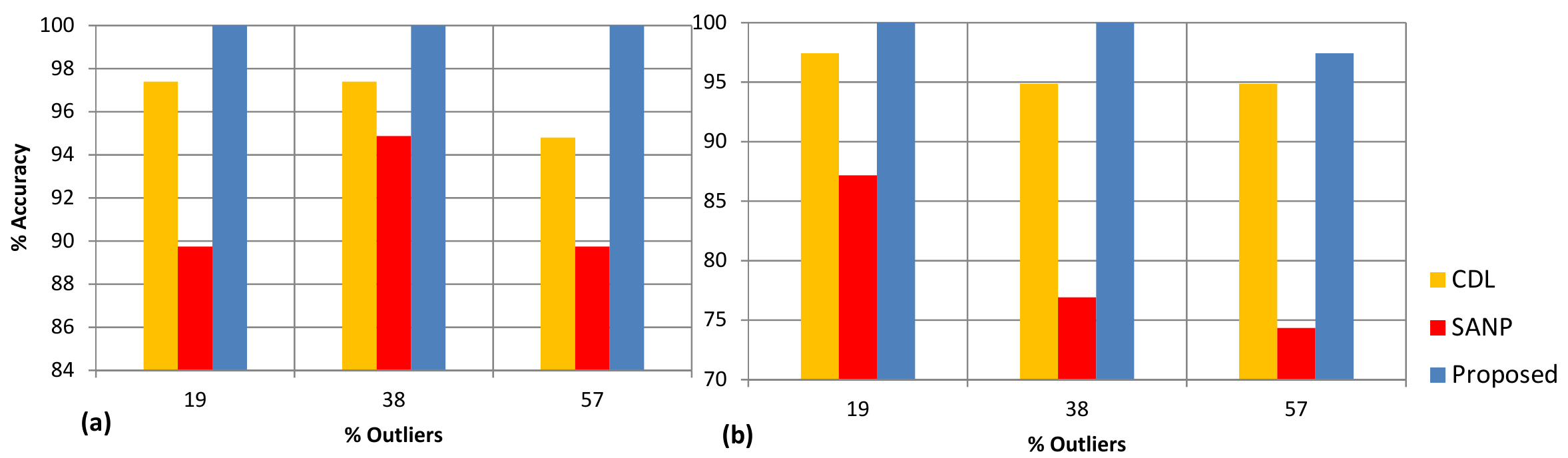}
\vspace{-6mm}
\caption{Robustness to outliers experiment: (a) Corrupted gallery case (b) Corrupted probe case. }
\label{fig:robustness} 
\vspace{-4mm}
\end{figure}

\section{Conclusion}
We presented a Classification Via Clustering (CVC) algorithm which bridges the gap between clustering algorithms and classification problems. The CVC algorithm performs classification by using unsupervised or semi-supervised clustering. Distribution based distance  measures are used to match the class-cluster distributions of the test set and the gallery classes.  A Semi-supervised Hierarchical Clustering (SHC) algorithm is proposed which optimizes the number of clusters using the class labels. An algorithm for Direct Fiedler Vector Computation (DFVC) is proposed to directly compute the second least-significant eigenvector of the Laplacian matrix.  Image-sets are mapped on Grassmannian manifolds and clustering is performed on the manifold bases. By using multiple representations of each set, multiple classifiers are designed and results are combined by mode fusion.  The proposed algorithm consistently showed better performance when compared with state of the art image-set classification algorithms on five standard datasets.

\section{Acknowledgements}

 This research was supported by ARC Discovery Grants DP1096801 and DP110102399. We thank T.~Kim and R.~Wang for sharing the implementations of DCC and MMD and the cropped faces of Honda data. We thank H.~Cevikalp for providing the LBP features of Mobo data and Y.~Hu for the MDA implementation. 



\begin{thebibliography}{10}\itemsep=-1pt

\bibitem{grassmann}
P.~A. Absil, R.~Mahony, and R.~Sepulchre.
\newblock Riemannian geometry of grassmann manifolds with a view on algorithmic
  computation.
\newblock {\em Acta Applicandae Mathematica}, 80(2):199--220, 2004.

\bibitem{Anand}
S.~Anand, S.~Mittal, O.~Tuzel, and P.~Meer.
\newblock Semi-supervised kernel mean shift clustering.
\newblock {\em Pattern Analysis and Machine Intelligence, IEEE Transactions
  on}, 2014.

\bibitem{beran1977minimum}
R.~Beran.
\newblock Minimum hellinger distance estimates for parametric models.
\newblock {\em The Annals of Statistics}, 5(3):445--463, 1977.

\bibitem{ADMM}
S.~Boyd, N.~Parikh, E.~Chu, B.~Peleato, and J.~Eckstein.
\newblock Distributed optimization and statistical learning via the alternating
  direction method of mulimage processing, ieee transactions onliers.
\newblock {\em Foundations and Trends in Machine Learning}, 3(1):1--122, 2011.

\bibitem{chen2009spectral}
G.~Chen and G.~Lerman.
\newblock Spectral curvature clustering ({SCC}).
\newblock {\em IJCV}, 81(3):317--330, 2009.

\bibitem{chenimproved}
S.~Chen, C.~Sanderson, M.~T. Harandi, and B.~C. Lovell.
\newblock Improved image set classification via joint sparse approximated
  nearest subspaces.
\newblock In {\em CVPR}, 2013.

\bibitem{chen2012dictionary}
Y.~Chen, V.~Patel, P.~Phillips, and R.~Chellappa.
\newblock Dictionary-based face recognition from video.
\newblock In {\em ECCV}. 2012.

\bibitem{SparseLabel}
H.~Cheng, Z.~Liu, and L.~Yang.
\newblock Sparsity induced similarity measure for label propagation.
\newblock In {\em ICCV}, 2009.

\bibitem{chung1997spectral}
F.~R. Chung.
\newblock {\em Spectral graph theory}, volume~92.
\newblock American Mathematical Soc., 1997.

\bibitem{ImagesetAlignment2012}
Z.~Cui, S.~Shan, H.~Zhang, S.~Lao, and X.~Chen.
\newblock Image sets alignment for video based face recognition.
\newblock In {\em CVPR}, 2012.

\bibitem{ElhamifarV09}
E.~Elhamifar and R.~Vidal.
\newblock Sparse subspace clustering.
\newblock In {\em CVPR}, pages 2790--2797, 2009.

\bibitem{ElhamPAMI}
E.~Elhamifar and R.~Vidal.
\newblock Sparse subspace clustering: Algorithm, theory, and applications.
\newblock {\em Pattern Analysis and Machine Intelligence, IEEE Transactions
  on}, 35(11):2765--2781, 2013.

\bibitem{fiedler1973algebraic}
M.~Fiedler.
\newblock Algebraic connectivity of graphs.
\newblock {\em Czechoslovak Mathematical Journal}, 23(2):298--305, 1973.

\bibitem{Fitzgibbon_CVPR_2003}
W.~Fitzgibbon and A.~Zisserman.
\newblock Joint manifold distance: A new approach to appearance based
  clustering.
\newblock In {\em CVPR}, pages 26--33, 2003.

\bibitem{fowlkes2004spectral}
C.~Fowlkes, S.~Belongie, F.~Chung, and J.~Malik.
\newblock Spectral grouping using the {N}ystrom method.
\newblock {\em Pattern Analysis and Machine Intelligence, IEEE Transactions
  on}, 26(2):214--225, 2004.

\bibitem{Matrix}
G.~H. Golub and C.~F. Van~Loan.
\newblock {\em Matrix Computations (3rd Ed.)}.
\newblock Johns Hopkins University Press, Baltimore, MD, USA, 1996.

\bibitem{Cevikalp_CVPR_2010}
{H. Cevikalp and B. Triggs}.
\newblock {Face recognition based on image sets}.
\newblock In {\em CVPR}, pages 2567--2573, 2010.

\bibitem{hamm2008grassmann}
J.~Hamm and D.~Lee.
\newblock Grassmann discriminant analysis: a unifying view on subspace
  learning.
\newblock In {\em ICML}, 2008.

\bibitem{harandi2011graph}
M.~Harandi, C.~Sanderson, S.~Shirazi, and B.~Lovell.
\newblock Graph embedding discriminant analysis on grassmannian manifolds for
  improved image set matching.
\newblock In {\em CVPR}, 2011.

\bibitem{Hu_TPAMI_2012}
Y.~Hu, A.~S. Mian, and R.~Owens.
\newblock Face recognition using sparse approximated nearest points between
  image sets.
\newblock {\em Pattern Analysis and Machine Intelligence, IEEE Transactions
  on}, 34(10):1992--2004, 2012.

\bibitem{Lee_CVPR_2003}
{K. Lee, J. Ho, M. Yang and D. Kriegman}.
\newblock {Video based face recognition using probabilistic appearance
  manifolds}.
\newblock In {\em CVPR}, 2003.

\bibitem{div1967}
T.~Kailath.
\newblock The divergence and bhattacharyya distance measures in signal
  selection.
\newblock {\em IEEE Comm. Tech.}, 15(1):52--60, 1967.

\bibitem{KimTensor}
T.~Kim and R.~Cipolla.
\newblock Canonical correlation analysis of video volume tensors for action
  categorization and detection.
\newblock {\em Pattern Analysis and Machine Intelligence, IEEE Transactions
  on}, 31(8), 2009.

\bibitem{Camdatabase}
T.~Kim, K.~Wong, and R.~Cipolla.
\newblock Tensor canonical correlation analysis for action classification.
\newblock In {\em CVPR}, 2007.

\bibitem{Lee07efficientsparse}
H.~Lee, A.~Battle, R.~Raina, and A.~Y. Ng.
\newblock Efficient sparse coding algorithms.
\newblock In {\em NIPS}, pages 801--808, 2007.

\bibitem{ETH80}
B.~Leibe and B.~Schiele.
\newblock Analyzing appearance and contour based methods for object
  categorization.
\newblock In {\em CVPR}, 2003.

\bibitem{ProbElasticMatching}
H.~Li, G.~Hua, Z.~Lin, J.~Brandt, and J.~Yang.
\newblock Probabilistic elastic matching for pose variant face verification.
\newblock In {\em CVPR}, 2013.

\bibitem{Kwok}
M.~Li, X.~Lian, J.~Kwok, and B.~Lu.
\newblock Time and space efficient spectral clustering via column sampling.
\newblock In {\em CVPR}, 2011.

\bibitem{Kim_CVPR_2008}
{M. Kim, S. Kumar, V. Pavlovic and H. Rowley}.
\newblock {Face Tracking and Recognition with Visual Constraints in Real-World
  Videos}.
\newblock In {\em CVPR}, 2008.

\bibitem{Nishiyama_CVPR_2007}
{M. Nishiyama, M. Yuasa, T. Shibata, T. Wakasugi, T. Kawahara and Yamaguchi}.
\newblock {Recognizing Faces of Moving People by Hierarchical Image-Set
  Matching}.
\newblock In {\em CVPR}, 2007.

\bibitem{MyBMVC12}
A.~Mahmood and A.~Mian.
\newblock Hierarchical sparse spectral clustering for image set classification.
\newblock {\em BMVC}, 2012.

\bibitem{MyCVPR14}
A.~Mahmood, A.~Mian, and R.~Owens.
\newblock Semi-supervised spectral clustering for image set classification.
\newblock {\em CVPR}, 2014.

\bibitem{Mairal1}
J.~Mairal, F.~Bach, J.~Ponce, and G.~Sapiro.
\newblock Online dictionary learning for sparse coding.
\newblock In {\em ICML}, 2009.

\bibitem{ngspectral}
A.~Y. Ng, M.~I. Jordan, and Y.~Weiss.
\newblock On spectral clustering: Analysis and an algorithm.
\newblock {\em NIPS}, 2001.

\bibitem{pengscalable}
X.~Peng, L.~Zhang, and Z.~Yi.
\newblock Scalable sparse subspace clustering.
\newblock {\em CVPR}, 2013.

\bibitem{Gross_TR_2001}
{R. Gross and J. Shi}.
\newblock {The CMU Motion of Body (MoBo) Database}.
\newblock Technical Report CMU-RI-TR-01-18, 2001.

\bibitem{Wang_CVPR_2009}
{R. Wang and X. Chen}.
\newblock {Manifold Discriminant Analysis}.
\newblock In {\em CVPR}, 2009.

\bibitem{Wang_CVPR_2008}
{R. Wang, S. Shan, X. Chen and W. Gao}.
\newblock {Manifold-Manifold Distance with Application to Face Recognition
  based on Image Set}.
\newblock In {\em CVPR}, 2008.

\bibitem{NormCut}
J.~Shi and J.~Malik.
\newblock Normalized cuts and image segmentation.
\newblock {\em Pattern Analysis and Machine Intelligence, IEEE Transactions
  on}, 22(8):888--905, 2000.

\bibitem{Kim_TPAMI_2007}
O.~A. T.~Kim and R.~Cipolla.
\newblock Discriminative learning and recognition of image set classes using
  canonical correlations.
\newblock {\em Pattern Analysis and Machine Intelligence, IEEE Transactions
  on}, 29(6), 2007.

\bibitem{Tibshirani94}
R.~Tibshirani.
\newblock Regression shrinkage and selection via the {Lasso}.
\newblock {\em Journal of the Royal Statistical Society, Series B},
  58:267--288, 1994.

\bibitem{von2007tutorial}
U.~Von~Luxburg.
\newblock A tutorial on spectral clustering.
\newblock {\em Statistics and computing}, 17(4):395--416, 2007.

\bibitem{CDL}
R.~Wang, H.~Guo, Davis, S.~Larry, and Q.~Dai.
\newblock Covariance discriminative learning: A natural and efficient approach
  to image set classification.
\newblock In {\em CVPR}, 2012.

\bibitem{wang2012manifold}
R.~Wang, S.~Shan, X.~Chen, Q.~Dai, and W.~Gao.
\newblock Manifold--manifold distance and its application to face recognition
  with image sets.
\newblock {\em Image Processing, IEEE Transactions on}, 21(10):4466--4479,
  2012.

\bibitem{Wright_TPAMI_2009}
J.~Wright, A.~Y. Yang, A.~Ganesh, S.~Sastry, and Y.~Ma.
\newblock Robust face recognition via sparse representation.
\newblock {\em Pattern Analysis and Machine Intelligence, IEEE Transactions
  on}, 31(2):210--227, 2009.

\bibitem{VidClust2013}
B.~Wu, Y.~Zhang, B.~Hu, and Q.~Ji.
\newblock Constrained clustering and its application to face clustering in
  videos.
\newblock In {\em CVPR}, 2013.

\bibitem{RegNearestPoints}
M.~Yang, P.~Zhu, L.~Gool, and L.~Zhang.
\newblock Face recognition based on regularized nearest points between image
  sets.
\newblock In {\em IEEE FG}, 2013.

\end{thebibliography}

\end{document}